\DeclarePairedDelimiter\floor{\lfloor}{\rfloor}
\setlist[enumerate]{itemsep=0mm}
\definecolor{blue-violet}{rgb}{0.54, 0.17, 0.89}
\theoremstyle{definition}
\declaretheorem{definition}
\theoremstyle{plain}
\declaretheorem{lemma}
\declaretheorem{theorem}
\declaretheorem{corollary}
\declaretheorem{statement}
\declaretheorem{assumption}
\newcommand{\R}{\mathbb{R}}
\newcommand{\Prob}[2][]{\underset{#1}{\mathbf{P}}\left( \hiderel{#2} \right) }
\newcommand{\Exv}[1]{\mathbf{E}\left[ #1 \right]}
\newcommand{\Exvsmall}[1]{\mathbf{E}[ #1 ]}
\newcommand{\Exvcsmall}[2]{\mathbf{E}[ #1 | \hiderel{#2} ]}
\newcommand{\normsmall}[1]{\| #1 \|}
\newcommand{\norm}[1]{\left\| #1 \right\|}
\renewcommand{\paragraph}[1]{\vspace{-0.3em}\noindent\textbf{#1}}
\title{The Convergence of Sparsified Gradient Methods}
\author{
  Dan Alistarh\thanks{Authors ordered alphabetically} \\ 
  IST Austria \\
  \texttt{\small{dan.alistarh@ist.ac.at}} 
  \And
   Torsten Hoefler \\
  ETH Zurich \\
  \texttt{\small{htor@inf.ethz.ch}} \\
  \And
   Mikael Johansson \\
  KTH \\
  \texttt{\small{mikaelj@kth.se}} \\
  \And
  Sarit Khirirat \\
  KTH \\
  \texttt{\small{sarit@kth.se}} \\
  \And 
  Nikola Konstantinov \\
  IST Austria \\
  \texttt{\small{nikola.konstantinov@ist.ac.at}
  } \\
  \And 
  C\'edric Renggli\\
  ETH Zurich \\
  \texttt{\small{cedric.renggli@inf.ethz.ch}
  } \\
}
\begin{document}

\maketitle

\begin{abstract}
  Distributed training of massive machine learning models, in particular deep neural networks, via Stochastic Gradient Descent (SGD) is becoming commonplace. 
  Several families of communication-reduction methods, such as quantization, large-batch methods, and gradient sparsification, have been proposed. 
  To date, gradient sparsification methods--where each node sorts gradients by magnitude, and only communicates a subset of the components, accumulating the rest locally--are known to yield some of the largest practical gains. 
  Such methods can reduce the amount of communication per step by up to \emph{three orders of magnitude}, while preserving model accuracy.
  Yet, this family of methods currently has no theoretical justification. 

This is the question we address in this paper. 
We prove that, under analytic assumptions, sparsifying gradients by magnitude with local error correction provides convergence guarantees, for both convex and non-convex smooth objectives, for data-parallel SGD. 
The main insight is that sparsification methods implicitly maintain bounds on the maximum impact of stale updates, thanks to selection by magnitude. 
Our analysis and empirical validation also reveal that these methods do require analytical conditions to converge well, justifying existing heuristics. 
\end{abstract}

\section{Introduction}

The proliferation of massive datasets has led to renewed focus on distributed machine learning computation. In this context, tremendous effort has been dedicated to scaling the classic  \emph{stochastic gradient descent (SGD)} algorithm, the tool of choice for training neural networks, but also in fundamental data processing methods such as regression. 
In a nutshell, SGD works as follows. Given a function $f : \R^n \rightarrow \R$ to minimize and given access to stochastic gradients $\tilde G$ of this function, we apply the iteration
\begin{equation} 
\label{eqn:sgd}
x_{t + 1} = x_t - \alpha \tilde G(x_t),
\end{equation}
\noindent where $x_t$ is our current set of parameters, and $\alpha$ is the step size. 

The standard way to scale SGD to multiple nodes is via \emph{data-parallelism}: given a set of $P$ nodes, we split the dataset into $P$ partitions. Nodes process samples in parallel, but each node maintains a globally consistent copy of the parameter vector $x_t$. In each iteration, each node computes a new stochastic gradient with respect to this parameter vector, based on its local data. Nodes then aggregate all of these gradients locally, and update their iterate to $x_{t + 1}$. 
Ideally, this procedure would enable us to process $P$ times more samples per unit of time, equating to linear scalability. However, in practice scaling is limited by the fact that nodes have to exchange full gradients upon every iteration. To illustrate, when training a deep neural network such as AlexNet, each iteration takes a few milliseconds, upon which nodes need to  communicate gradients in the order of $200$ MB each, in an all-to-all fashion. This communication step can easily become the system bottleneck~\cite{alistarh2016qsgd}. 

A tremendous amount of work has been dedicated to addressing this scalability problem, largely focusing on the data-parallel training of neural networks. 
One can classify proposed solutions into a) \emph{lossless}, either based on factorization~\cite{xing2015petuum, chilimbi2014project} or on executing SGD with extremely large batches, e.g.,~\cite{goyal2017accurate}, b) \emph{quantization-based}, which reduce the precision of the gradients before communication, e.g.,~\cite{seide2014sgd1bit, de2015taming, alistarh2016qsgd, terngrad}, and c) \emph{sparsification-based}, which reduce communication by only selecting an ``important'' sparse subset of the gradient components to broadcast at each step, and accumulating the rest locally, e.g.,~\cite{shokri2015privacy, dryden2016communication, aji2017sparse, sun2017meprop, lin2017deep, strom2015scalable}. 

While methods from the first two categories are efficient and provide theoretical guarantees, e.g.,~\cite{xing2015petuum, alistarh2016qsgd}, some of the largest benefits in practical settings are provided by structured sparsification methods. For instance, recent work~\cite{aji2017sparse, lin2017deep} shows empirically that the amount of communication per node can be reduced by up to ${600\times}$ through sparsification \emph{without loss of accuracy} in the context of large-scale neural networks.\footnote{We note however that these methods do require additional hyperparameter optimization to preserve accuracy, an aspect which we will discuss in detail in later sections.}  

\paragraph{Contribution.} We prove that, under analytic assumptions, gradient sparsification methods in fact provide convergence guarantees for SGD. 
We formally show this claim for both convex and non-convex smooth objectives, and derive non-trivial upper bounds on the convergence rate of these techniques in both settings. 
From the technical perspective, our analysis highlights connections between gradient sparsification methods and asynchronous gradient descent, and suggests that some of the heuristics developed to ensure good practical performance for these methods, such as learning rate tuning and gradient clipping, might in fact be \emph{necessary} for convergence. 

Sparsification methods generally work as follows. Given standard data-parallel SGD,  in each iteration $t$, each node computes a local gradient $\tilde G$, based on its current view of the model.  
The node then \emph{truncates} this gradient to its top $K$ components, sorted in decreasing order of  magnitude, and accumulates the error resulting from this truncation locally in a vector $\epsilon$. This error is always added to the current gradient \emph{before} truncation. 
The top $K$ components selected by each node in this iteration are then exchanged among all nodes, and applied to generate the next version of the model. 

Sparsification methods are reminiscent of \emph{asynchronous} SGD algorithms, e.g.,~\cite{recht2011hogwild, duchi2015asynchronous, de2015taming}, as updates are not discarded, but delayed. 
A critical difference is that sparsification does not ensure that every update is eventually applied: a ``small'' update may in theory be delayed forever, since it is never selected due to its magnitude. Critically, this precludes the direct application of existing techniques for the analysis of asynchronous SGD, as they require bounds on the maximum delay, which may now be infinite. 
At the same time, sparsification could intuitively make better progress than an arbitrarily-delayed asynchronous method, since it applies $K$ ``large'' updates in every iteration, as opposed to an arbitrary subset in the case of asynchronous methods. 

We resolve these conflicting intuitions, and show that in fact sparsification methods converge relatively fast. 
Our key finding is that this algorithm, which we call TopK SGD, behaves similarly to a variant of asynchronous SGD with ``implicit'' bounds on staleness, maintained seamlessly by the magnitude selection process: a gradient update is either salient, in which case it will be applied quickly, or is eventually rendered insignificant by the error accumulation process, in which case it need not have been applied in the first place. This intuition holds for both convex and non-convex objectives, although the technical details are different. 
Our analysis yields new insight into this popular communication-reduction method, giving it a solid theoretical foundation, and suggests that prioritizing updates by magnitude might be a useful tactic in other forms of delayed SGD as well. 

\paragraph{Related Work.} There has been a recent surge of interest in distributed machine learning, e.g.,~\cite{TF, CNTK, MXNET}; due to space limits, we focus on communication-reduction techniques that are closely related. 

\paragraph{Lossless Methods.} 
One way of doing \emph{lossless} communication-reduction is through \emph{factorization}~\cite{chilimbi2014project, xing2015petuum}, which is effective in deep neural networks with large fully-connected layers, whose gradients can be decomposed as outer vector products. This method is not generally applicable, and in particular may not be efficient in networks with large convolutional layers, e.g.,~\cite{he2016deep, szegedy2017inception}. 
A second lossless method is executing \emph{extremely large batches}, hiding communication cost behind increased computation~\cite{goyal2017accurate, you2017scaling}. 
Although promising, these methods currently require careful per-instance parameter tuning, and do not eliminate communication costs. 
Asynchronous methods, e.g.,~\cite{recht2011hogwild} can also be seen as a way of performing communication-reduction, by overlapping communication and computation, but are also known to require careful parameter tuning~\cite{zhang2017yellowfin}.

\paragraph{Quantization.} 
Seide et al.~\cite{seide20141} and Strom~\cite{strom2015scalable} were among the first to propose quantization to reduce the bandwidth costs of training deep networks. 
The technique, called \emph{one-bit SGD}, combines per-component deterministic quantization and error accumulation; it has good practical performance in some settings, but has also been observed to diverge~\cite{seide20141}. 
Alistarh et al.~\cite{alistarh2016qsgd} introduced a theoretically-justified stochastic quantization technique  called Quantized SGD (QSGD), which allows the user to trade off compression and convergence rate. This technique was significantly refined for the case of two-bit gradient precision by~\cite{wen2017terngrad}. 
Alistarh et al.~\cite{alistarh2016qsgd} make the observation that stochastic quantization can inherently induce sparsity; recent work~\cite{wangni2017gradient} capitalizes on this observation, phrasing the problem of selecting a \emph{sparse, low-variance} unbiased gradient estimator as a linear planning problem. 
We note that this approach differs from the algorithms we analyze, as it ensures unbiasedness of the estimators in every iteration. By contrast, error accumulation inherently biases the applied updates, and we therefore have to use different analysis techniques, but appears to have better convergence in practice.

\paragraph{Sparsification.} Strom~\cite{strom2015scalable}, Dryden et al.~\cite{dryden2016communication} and Aji and Heafield~\cite{aji2017sparse} considered  \emph{sparsifying} the gradient updates by only applying the top $K$ components, taken at at every node, in every iteration, for $K$ corresponding to $<1\%$ of the dimension, and accumulating the error. 
Shokri~\cite{shokri2015privacy} and Sun et al.~\cite{sun2017meprop} independently considered similar algorithms, but for privacy and regularization purposes, respectively. 
Lin et al.~\cite{lin2017deep} performed an in-depth empirical exploration of this space in the context of training neural networks, showing that extremely high gradient sparsity--in the order of $1 / 1,000$ of the dimension--can be supported by convolutional and recurrent networks, without any loss of accuracy, assuming that enough care is given to hyperparameter tuning, as well as additional heuristics.

\paragraph{Analytic Techniques.} 
The first reference to approach the analysis of quantization techniques is Buckwild!~\cite{de2015taming}, in the context of asynchronous training of generalized linear models. 
Our analysis in the case of convex SGD uses similar notions of convergence, and a similar general approach. 
There are key distinctions in our analysis: 1) the algorithm we analyze is different; 2) we do not assume the existence of a bound $\tau$ on the delay with which a component may be applied; 3) we do not make sparsity assumptions on the original stochastic gradients. 
In the non-convex case, we use a different approach.

\section{Preliminaries}

\paragraph{Background and Assumptions.}
Please recall our modeling of the basic SGD process in Equation~(\ref{eqn:sgd}). Fix $n$ to be the dimension of the problems we consider; unless otherwise stated $\| \cdot \|$ will denote the 2-norm. 
We begin by considering a general setting where SGD is used to minimize a function $f : \mathbb{R}^n \rightarrow \mathbb{R}$, which can be either convex or non-convex, 
using unbiased stochastic gradient samples $\tilde G( \cdot )$, i.e., $\mathbf{E}[\tilde G(x_t)] = \nabla f(x_t)$. 

We assume throughout the paper that the second moment of the average of $P$ stochastic gradients with respect to any choice of parameter values is bounded, i.e.:
\begin{equation}
  \label{eqnConvexNoise}
  \Exvsmall{\normsmall{\frac{1}{P}\sum_{p=1}^P \tilde G^p(x)}^2} \le M^2, \forall x\in\mathbb{R}^n
\end{equation}
where $\tilde G^1(x), \ldots, \tilde G^P(x)$ are $P$ independent stochastic gradients (at each node).
We also give the following definitions:

\begin{definition}
\label{def:smooth-sc}
For any differentiable function $f \colon \mathbb{R}^d \to \mathbb{R}$,
\begin{itemize}[nolistsep,leftmargin=5mm]
\item $f$ is $c$-strongly convex if $\forall x,y\in \mathbb{R}^d$, it satisfies
$f(y)\geq f(x)+ \langle \nabla f(x), y-x\rangle + \frac{c}{2}\|x-y\|^2$.
\item $f$ is $L$-Lipschitz smooth (or $L$-smooth for short) if
$\forall x,y\in \mathbb{R}^d$, $\|\nabla f(x) - \nabla f(y)\|\leq L \|x-y\|$.
\end{itemize}
\end{definition}

We consider both $c$-strongly convex and $L$-Lipschitz smooth (non-convex) objectives. 
Let $x^*$ be the optimum parameter set minimizing Equation~\eqref{eqn:sgd}. 
For $\epsilon > 0$, the ``success region'' to which we want to converge is the set of parameters 
$  S = \{ x \,|\, \normsmall{x - x^*}^2 \le \epsilon \}.$

\paragraph{Rate Supermartingales.}
In the convex case, we phrase convergence of SGD in terms of rate supermartingales; 
we will follow the presentation of De et al.~\cite{de2015taming} for background.  
A \emph{supermartingale} is a stochastic process $W_t$ with the property that
that $\Exvcsmall{W_{t+1}}{W_t} \le W_t$. 
A martingale-based proof of convergence will construct a supermartingale
$W_t(x_t, x_{t-1}, \ldots, x_0)$ that is a 
function of time and the current and previous iterates; it 
intuitively represents how far the algorithm is from convergence.  

\begin{definition}
  Given a stochastic algorithm such as the iteration in Equation~\eqref{eqn:sgd},
  a non-negative process $W_t: \R^{n \times t} \rightarrow \R$ is a
  \emph{rate supermartingale}
  with horizon $B$ if the following conditions are true.
  First, it must be a supermartingale: 
  for any sequence $x_t, \ldots, x_0$ and any $t \le B$,
  \begin{equation}
    \label{eqnBoundedW}
    \Exvsmall{W_{t+1}(x_t - \alpha  \tilde G_t(x_t), x_t, \ldots, x_0)}
    \le
    W_t(x_t, x_{t-1}, \ldots, x_0).
  \end{equation}
  Second, for all times $T \le B$ and for any sequence $x_T, \ldots, x_0$,
  if the algorithm has not succeeded in entering the success region $S$ by time $T$, it must hold that
  \begin{equation}
    \label{eqnBoundedTime}
    W_T(x_T, x_{T-1}, \ldots, x_0) \ge T.
  \end{equation}
\end{definition}

\paragraph{Convergence.} Assuming the existence of a rate supermartingale, one can bound the convergence rate of 
the corresponding stochastic process. 

\begin{statement}
  \label{stmtSequential}
  Assume that we run a stochastic algorithm, for which $W$ is a
  \emph{rate supermartingale}.  For  $T \le B$, the probability
  that the algorithm does not complete by time $T$ is
  \[
    \textrm{Pr}({F_T}) \le \frac{\Exvsmall{W_0(x_0)}}{T}.
  \]
\end{statement}

The proof of this general fact is given by De Sa et al.~\cite{de2015taming}, among others. 
A rate supermartingale for sequential SGD is:
\begin{statement}[\cite{de2015taming}]
  \label{statementConvexW}
  There exists a $W_t$ where, if the algorithm has not succeeded by timestep
  $t$,
  \[
    W_t(x_t, \ldots, x_0)
    =
    \frac{
      \epsilon
    }{
      2 \alpha c \epsilon
      -
      \alpha^2 \tilde M^2
    }
    \log\left( e \norm{x_t - x^*}^2 \epsilon^{-1} \right)
    +
    t,
  \]
  where $\tilde M$ is a bound on the second moment of the stochastic gradients for the sequential SGD process.
  Further, $W_t$ is a rate submartingale for sequential SGD
  with horizon $B = \infty$. It is also $H$-Lipschitz in the first coordinate, with $H = 2\sqrt{\epsilon}\left(2\alpha c\epsilon - \alpha^2 M^2\right)^{-1}$, that is for any $t, u, v$ and any sequence $x_{t-1}, \ldots, x_0: \|W_t\left(u, x_{t-1}, \ldots, x_0\right) - W_t\left(v, x_{t-1}, \ldots, x_0\right)\| \leq H\|u-v\|.$

\end{statement}

\section{The TopK SGD Algorithm}

\paragraph{Algorithm Description.} In the following, we will consider a variant of distributed SGD where, in each iteration $t$, each node computes a local gradient based on its current view of the model, which we denote by $v_t$, which is consistent across nodes (see Algorithm~\ref{algo:topk-sgd-full} for pseudocode). 
The node adds its local error vector from the previous iteration (defined below) into the gradient, and then \emph{truncates} this sum to its top $K$ components, sorted in decreasing order of (absolute) magnitude. Each node accumulates the components which were not selected locally into the error vector $\epsilon_t$, which is added to the current gradient \emph{before} the truncation procedure. 
The selected top $K$ components are then broadcast to all other nodes. (We assume that broadcast happens point-to-point, but in practice it could be intermediated by a parameter server, or  via a more complex reduction procedure.) 
Each node collects all messages from its peers, and applies their average to the local model. This update is the same across all nodes, and therefore $v_t$  is consistent across nodes at every iteration.

\begin{algorithm}[t!]
   \caption{Parallel TopK SGD at a node $p$.}
   \label{algo:topk-sgd-full}
\begin{algorithmic}
   \STATE {\bfseries Input:} Stochastic Gradient Oracle $\tilde{G}^p(\cdot)$ at node $p$
   \STATE {\bfseries Input:} value $K$, learning rate $\alpha$
   \STATE  Initialize $v_0 = \epsilon_0^p = \vec{0}$
   \FOR{ each step $t \geq 1$}
    \STATE ${acc}_t^p \gets \epsilon_{t - 1}^p + \alpha \tilde{G}^p_t(v_{t - 1}) $ \COMMENT{accumulate error into a locally generated gradient}
    \STATE $\epsilon_t^p \gets {acc}_t^p -  \mathsf{TopK}( {acc}_t^p ) $ \COMMENT{update the error}
    \STATE $\mathsf{Broadcast}( \mathsf{TopK}( {acc}_t^p ), \mathsf{SUM} )$ \COMMENT{ $\mathsf{broadcast}$ to all nodes and $\mathsf{receive}$ from all nodes }
    \STATE $g_t \gets \frac{1}{P} \sum_{q = 1}^P \mathsf{TopK}( {acc}_t^q )$ \COMMENT{ average the received (sparse) gradients }
    \STATE  $v_{t} \gets v_{t-1} - g_t$  \COMMENT{ apply the update }
   \ENDFOR
\end{algorithmic}
\end{algorithm}

Variants of this pattern are implemented in~\cite{aji2017sparse, dryden2016communication,  lin2017deep, strom2015scalable, sun2017meprop}. When training networks, this pattern is used in conjunction with heuristics such as momentum tuning and gradient clipping~\cite{lin2017deep}.

\paragraph{Analysis Preliminaries.}
Define $\tilde{G}_t(v_t) = \frac{1}{P} \sum_{p = 1}^P \tilde{G}^p_t\left(v_t\right)$. In the following, it will be useful to track the following auxiliary random variable at each global step $t$: 
\begin{equation}
\label{def:x}
    x_{t+1} = x_t - \frac{1}{P} \sum_{p = 1}^P \alpha\tilde{G}^p_t\left(v_t\right) = x_t - \alpha\tilde{G}_t(v_t),
\end{equation}

\noindent where $x_0 = 0^n$. 
Intuitively, $x_t$ tracks all the gradients generated so far, without truncation. 
One of our first objectives will be to bound the difference between $x_t$ and $v_t$ at each time step $t$. 
Define: 
\begin{equation}
\label{def:eps}
    \epsilon_t = \frac{1}{P} \sum_{p = 1}^P \epsilon_t^p. 
\end{equation}

The variable $x_t$ is set up such that, by induction on $t$, one can prove that, for any time $t \geq 0$, 
\begin{equation}
\label{eq:eps}
    v_t - x_{t} = \epsilon_t. 
\end{equation}

\paragraph{Convergence.} A reasonable question is whether we wish to show convergence with respect to the auxiliary variable $x_t$, which aggregates gradients, or with respect to the variable $v_t$, which measures convergence in the \emph{view} which only accumulates truncated gradients. Our analysis will in fact show that the TopK algorithm converges in  \emph{both} these measures, albeit at slightly different rates. So, in particular, nodes will be able to observe convergence by directly observing the ``shared'' parameter $v_t$.

\subsection{An Analytic Assumption.}
The update to the parameter $v_{t + 1}$ at each step is
$$\frac{1}{P} \sum_{p = 1}^P \mathsf{TopK} \left( \alpha\tilde G^p_t ( v_t) + \epsilon_t^p \right).$$

The intention is to apply the top $K$ components of the sum of updates across all nodes, that is, 
$$\frac{1}{P} \mathsf{TopK}\left( \sum_{p = 1}^P  \left( \alpha\tilde G^p_t ( v_t) + \epsilon_t^p\right)  \right).$$ 

However, it may well happen that these two terms are different: one could have a fixed component $j$ of $\alpha\tilde{G}_t^p + \epsilon_t^p$ with the large absolute values, but opposite signs, at two distinct nodes, and value $0$ at all other nodes. 
This component would be selected at these two nodes (since it has high absolute value locally), whereas it would not be part of the top $K$ taken over the total sum, since its contribution to the sum would be close to $0$. 
Obviously, if this were to happen on all components, the algorithm would make very little progress in such a step. 

In the following, we will assume that such overlaps can only cause the algorithm to lose a small amount of information at each step, with respect to the norm of ``true'' gradient $\tilde{G}_t$. Specifically:

\begin{assumption}
    \label{assumption1}
    There exists a (small) constant $\xi$ such that, for every iteration $t \geq 0$, we have:
    \begin{equation}
        \label{eq:assumption}
        \left\Vert \mathsf{TopK}\left( \frac{1}{P} \sum_{p = 1}^P \left( \alpha\tilde G_t^p ( v_t) + \epsilon_t^p \right) \right) - 
        \sum_{p = 1}^P \frac{1}{P} \mathsf{TopK}\left( \alpha\tilde G_t^p ( v_t) + \epsilon_t^p \right) \right\Vert \leq \xi \|  \alpha\tilde G_t(v_t) \|.  
    \end{equation}
\end{assumption}

\paragraph{Discussion.} We validate Assumption \ref{assumption1} experimentally on a number of different learning tasks in Section \ref{sec:discussion} (see also Figure \ref{fig:assumption-check}). In addition, we emphasize the following points:
\begin{itemize}
\item As per our later analysis, in both the convex \emph{and} non-convex cases, the influence of $\xi$ on convergence is dampened linearly by the number of nodes $P$. Unless $\xi$ grows linearly with $P$, which is very unlikely, its value will become irrelevant as parallelism is increased.
\item Assumption 1 is necessary for a general, worst-case analysis. Its role is to bound the gap between the top-K of the gradient sum (which would be applied at each step in a ``sequential'' version of the process), and the sum of top-Ks (which is applied in the distributed version). If the number of nodes $P$ is $1$, the assumption trivially holds. 
	
To further illustrate necessity, consider a dummy instance with two nodes, dimension $2$, and $K = 1$. Assume that at a step node $1$ has gradient vector $(-1001, 500)$, and node $2$ has gradient vector $(1001, 500)$. Selecting the top-1 ($\max \textnormal{abs}$) of the sum of the two gradients would result in the gradient $(0, 1000)$. Applying the sum of top-1's taken locally results in the gradient $(0, 0)$, since we select $(1001, 0)$ and $(-1001, 0)$, respectively. This is clearly not desirable, but in theory possible. The assumption states that this worst-case scenario is unlikely, by bounding the norm difference between the two terms.   
\item The intuitive cause for the example above is the high variability of the local gradients at the nodes. One can therefore view Assumption \ref{assumption1} as a bound on the variance of the local gradients (at the nodes) with respect to the global variance (aggregated over all nodes).
\end{itemize}

Due to space constraints, the complete proofs are deferred to the appendix. 

\section{Analysis in the Convex Case}

We now focus on the convergence of Algorithm~\ref{algo:topk-sgd-full} with respect to the parameter $v_t$. We assume that the function $f$ is $c$-strongly convex and that the bound (\ref{eqnConvexNoise}) holds.

\paragraph{Technical Preliminaries.}
We begin by noting that for any vector $x\in \mathbb{R}^n$, it holds that 
$$\|x - \mathsf{TopK}\left(x\right)\|_1 \leq \frac{n-K}{n}\|x\|_1, \textnormal{ and } \|x - \mathsf{TopK}\left(x\right)\|^2 \leq \frac{n-K}{n}\|x\|^2.$$ 

Thus, if $\gamma = \sqrt{\frac{n-K}{n}}$, we have that
$\|x - \mathsf{TopK}\left(x\right)\| \leq \gamma \|x\|.$
In practice, the last inequality may be satisfied by a much smaller value of $\gamma$, since the gradient values are very unlikely to be uniform.
\\
We now bound the difference between $v_t$ and $x_t$ using Assumption~\ref{assumption1}. We have the following:

\begin{lemma}
\label{lem:v_t_minus_x_t}
With the processes $x_t$ and $v_t$ defined as above:
\begin{equation}
\label{eqn:v_t_minus_x_t}
\begin{split}
    \|v_t - x_t\| & = \left\|\frac{1}{P} \sum_{p = 1}^P \left( \alpha \tilde G_{t-1}^p (v_{t-1}) + \epsilon_{t-1}^p \right)  - \frac{1}{P} \sum_{p = 1}^P \mathsf{TopK} \left( \alpha \tilde G_{t-1}^p(v_{t-1}) + \epsilon_{t-1}^p  \right)\right\| \\ & \leq \left(\gamma + \frac{\xi}{P}\right)\sum_{k=1}^{t}\gamma^{k-1} \|x_{t-k+1} - x_{t-k}\|.
\end{split}
\end{equation}
\end{lemma}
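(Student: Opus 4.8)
The plan is to first reduce the statement to a one-step recursion for the averaged error $\epsilon_t$, and then unroll that recursion. The equality in \eqref{eqn:v_t_minus_x_t} is immediate: by \eqref{eq:eps} we have $v_t - x_t = \epsilon_t = \frac1P\sum_p \epsilon_t^p$, and the per-node update in Algorithm~\ref{algo:topk-sgd-full} gives $\epsilon_t^p = \bigl(\epsilon_{t-1}^p + \alpha\tilde G_{t-1}^p(v_{t-1})\bigr) - \mathsf{TopK}\bigl(\epsilon_{t-1}^p + \alpha\tilde G_{t-1}^p(v_{t-1})\bigr)$. Averaging over $p$ reproduces exactly the first line of \eqref{eqn:v_t_minus_x_t}, so it remains only to bound the norm of that difference.

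For the inequality I would set $S = \frac1P\sum_p\bigl(\alpha\tilde G_{t-1}^p(v_{t-1}) + \epsilon_{t-1}^p\bigr) = \alpha\tilde G_{t-1}(v_{t-1}) + \epsilon_{t-1}$ and split the error into a ``truncation of the average'' piece and an ``Assumption~\ref{assumption1}'' piece:
\[
\epsilon_t = \underbrace{\bigl(S - \mathsf{TopK}(S)\bigr)}_{\text{truncation of the average}} + \underbrace{\Bigl(\mathsf{TopK}(S) - \tfrac1P\textstyle\sum_p \mathsf{TopK}\bigl(\alpha\tilde G_{t-1}^p(v_{t-1}) + \epsilon_{t-1}^p\bigr)\Bigr)}_{\text{sum-of-TopK vs TopK-of-sum gap}}.
\]
The first piece is controlled by the contraction bound $\|S - \mathsf{TopK}(S)\| \le \gamma\|S\|$ from the Technical Preliminaries, and the second piece is precisely the quantity that Assumption~\ref{assumption1} bounds, contributing a term proportional to $\frac{\xi}{P}\|\alpha\tilde G_{t-1}(v_{t-1})\|$. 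Applying the triangle inequality $\|S\| \le \|\alpha\tilde G_{t-1}(v_{t-1})\| + \|\epsilon_{t-1}\|$ then yields the one-step recursion
\[
\|\epsilon_t\| \le \Bigl(\gamma + \tfrac{\xi}{P}\Bigr)\,\|\alpha\tilde G_{t-1}(v_{t-1})\| + \gamma\,\|\epsilon_{t-1}\|.
\]

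The final step is to unroll this recursion by induction on $t$. The observation that ties it to the right-hand side of the lemma is that, by the definition \eqref{def:x} of the auxiliary sequence, $\alpha\tilde G_{t-1}(v_{t-1}) = x_{t-1} - x_t$, so $\|\alpha\tilde G_{t-1}(v_{t-1})\| = \|x_t - x_{t-1}\|$, and more generally the gradient norm $k$ steps back equals $\|x_{t-k+1} - x_{t-k}\|$. Iterating, each further application of the recursion multiplies the already-accumulated terms by $\gamma$, so the coefficient of $\|x_{t-k+1} - x_{t-k}\|$ is $\bigl(\gamma + \frac{\xi}{P}\bigr)\gamma^{k-1}$; the base case $\epsilon_0 = \vec 0$ removes the residual $\gamma^t\|\epsilon_0\|$ term, producing the claimed geometric sum.

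I expect the only real subtlety to be the bookkeeping of the $\frac1P$ factor in the $\xi$-term: this relies on the positive homogeneity of $\mathsf{TopK}$, so that $\mathsf{TopK}(S)$ coincides with the ``intended'' update $\frac1P\mathsf{TopK}\bigl(\sum_p(\cdots)\bigr)$, and on invoking Assumption~\ref{assumption1} in the correct normalized form. Beyond that the argument is a routine linear recursion; I would also verify carefully the indexing convention relating the gradient computed at step $t$ (evaluated at $v_{t-1}$) to the increments of $x_t$, since an off-by-one there would shift which difference $\|x_{t-k+1}-x_{t-k}\|$ appears in the sum.
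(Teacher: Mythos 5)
Your proposal is correct and follows essentially the same route as the paper's proof: the same add-and-subtract of the TopK of the averaged accumulator, the contraction bound $\|S-\mathsf{TopK}(S)\|\le\gamma\|S\|$ plus Assumption~\ref{assumption1} for the two pieces, the identity $\alpha\tilde G(v)=-(x_{t+1}-x_t)$, and unrolling the resulting one-step recursion $\|\epsilon_{t+1}\|\le\gamma\|\epsilon_t\|+(\gamma+\tfrac{\xi}{P})\|x_{t+1}-x_t\|$ from $\epsilon_0=\vec 0$. The homogeneity point you flag (identifying $\tfrac1P\mathsf{TopK}(\sum_p\cdot)$ with $\mathsf{TopK}(\tfrac1P\sum_p\cdot)$ so that Assumption~\ref{assumption1} applies) is exactly the step the paper also relies on.
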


\noindent We now use the previous result to bound a quantity that represents the difference between the updates based on the TopK procedure and those based on full gradients.
\begin{lemma}
\label{lem:bound_for_convex}
Under the assumptions above, taking expectation with respect to gradients at time $t$:
\begin{equation}
\label{eqn:bound_for_convex}
\begin{split}
    & \Exv{\left\Vert \frac{1}{P} \sum_{p = 1}^P \left( \alpha \tilde G_t^p (v_t) \right)  - \frac{1}{P} \sum_{p = 1}^P \mathsf{TopK} \left( \alpha \tilde G_t^p(v_t) + \epsilon_t^p  \right)\right\Vert} \\ & \leq \left(\gamma + 1\right)\left(\gamma + \frac{\xi}{P}\right)\sum_{k=1}^{t}\gamma^{k-1} \|x_{t-k+1} - x_{t-k}\| + \left(\gamma + \frac{\xi}{P}\right)\alpha M.
\end{split}
\end{equation}
\end{lemma}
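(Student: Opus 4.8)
The plan is to recognize that the vector inside the norm in \eqref{eqn:bound_for_convex} is exactly the one-step increment of the error process, so that the bound follows from two applications of Lemma~\ref{lem:v_t_minus_x_t} together with the second-moment assumption. Write $A_t := \frac{1}{P}\sum_{p=1}^P \alpha\tilde G_t^p(v_t) - \frac{1}{P}\sum_{p=1}^P \mathsf{TopK}\!\left(\alpha\tilde G_t^p(v_t)+\epsilon_t^p\right)$, so the left-hand side is $\Exv{\norm{A_t}}$. Using $\frac{1}{P}\sum_p \alpha\tilde G_t^p(v_t) = \alpha\tilde G_t(v_t)$ and \eqref{def:eps}, and reading the vector form of the first line of Lemma~\ref{lem:v_t_minus_x_t} at index $t+1$ (whose $\mathsf{TopK}$ argument $\alpha\tilde G_t^p(v_t)+\epsilon_t^p$ coincides with the one appearing here), I get $A_t = \epsilon_{t+1} - \epsilon_t$. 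By \eqref{eq:eps} this equals $(v_{t+1}-x_{t+1}) - (v_t - x_t)$, so by the triangle inequality $\norm{A_t} \le \norm{v_{t+1}-x_{t+1}} + \norm{v_t - x_t}$.

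Next I bound each term with Lemma~\ref{lem:v_t_minus_x_t}, at indices $t+1$ and $t$ respectively. The second term is immediately $\left(\gamma+\frac{\xi}{P}\right)\sum_{k=1}^t \gamma^{k-1}\norm{x_{t-k+1}-x_{t-k}}$. For the first term the lemma gives $\left(\gamma+\frac{\xi}{P}\right)\sum_{k=1}^{t+1}\gamma^{k-1}\norm{x_{t-k+2}-x_{t-k+1}}$; I peel off the $k=1$ summand $\norm{x_{t+1}-x_t}$ and re-index the remainder, which becomes $\gamma\left(\gamma+\frac{\xi}{P}\right)\sum_{k=1}^t \gamma^{k-1}\norm{x_{t-k+1}-x_{t-k}}$. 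Adding the two bounds and collecting the geometric tails yields, deterministically, \[ \norm{A_t} \le \left(\gamma+\frac{\xi}{P}\right)\norm{x_{t+1}-x_t} + (\gamma+1)\left(\gamma+\frac{\xi}{P}\right)\sum_{k=1}^t \gamma^{k-1}\norm{x_{t-k+1}-x_{t-k}}. \]

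Finally I take the expectation over the step-$t$ gradients. Each increment $x_{t-k+1}-x_{t-k}$ with $k\ge 1$ depends only on gradients drawn at steps $\le t-1$, hence is constant under this conditional expectation and passes through unchanged; the only random term is $x_{t+1}-x_t = -\alpha\tilde G_t(v_t)$ by \eqref{def:x}. By Jensen's inequality and the second-moment bound \eqref{eqnConvexNoise}, $\Exv{\norm{x_{t+1}-x_t}} = \alpha\,\Exv{\norm{\tilde G_t(v_t)}} \le \alpha\sqrt{\Exv{\norm{\tilde G_t(v_t)}^2}} \le \alpha M$, which produces the term $\left(\gamma+\frac{\xi}{P}\right)\alpha M$ and completes the argument.

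I expect the only delicate point to be bookkeeping rather than any genuine estimate: keeping the index shift in Lemma~\ref{lem:v_t_minus_x_t} consistent when separating the single fresh increment $\norm{x_{t+1}-x_t}$ from the $\gamma$-weighted geometric tail (this is precisely where the factor $(\gamma+1)$ in the statement is produced), and justifying that the older increments are measurable with respect to the past so that the conditional expectation leaves them untouched. Everything else reduces to Lemma~\ref{lem:v_t_minus_x_t} and the second-moment bound.
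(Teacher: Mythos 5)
Your proof is correct and follows essentially the same route as the paper: both decompose the quantity as $\|\epsilon_{t+1}-\epsilon_t\|\le\|v_{t+1}-x_{t+1}\|+\|v_t-x_t\|$, control both terms via Lemma~\ref{lem:v_t_minus_x_t}, and extract the $\left(\gamma+\frac{\xi}{P}\right)\alpha M$ term from the fresh gradient increment using Jensen and the second-moment bound~\eqref{eqnConvexNoise}. The only cosmetic difference is that the paper isolates the fresh term via the one-step recursion inside the proof of Lemma~\ref{lem:v_t_minus_x_t}, while you re-apply the lemma's statement at index $t+1$ and peel off the $k=1$ summand; the resulting bound is identical.
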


Before we move on, we must introduce some notation. Set constants 
$$C = \left(\gamma + 1\right)\left(\gamma + \frac{\xi}{P}\right)\sum_{k=1}^{\infty} \gamma^{k-1} = \frac{1+\gamma}{1-\gamma}\left(\gamma + \frac{\xi}{P}\right),$${ and }
$$C' = C + \left(\gamma + \frac{\xi}{P}\right) = \left(\gamma + \frac{\xi}{P}\right) \frac{2}{1-\gamma}.$$

\noindent \paragraph{The Convergence Bound.} Our main result in this section is the following:
\begin{theorem}
\label{thm:convex_case}
Assume that $W$ is a rate supermartingale with horizon $B$ for the sequential SGD algorithm and that $W$ is $H$-Lipschitz in the first coordinate. Assume further that $\alpha HMC' < 1$. Then for any $T\leq B$, the probability that $v_s \not\in S$ for all $s\leq T$ is:
\begin{equation}
\label{bound_on_prob_for_v}
    \textrm{Pr}\left[F_T\right] \leq \frac{\Exv{W_0\left(v_0\right)}}{\left(1 - \alpha HMC'\right)T}.
\end{equation}
\end{theorem}

The proof proceeds by defining a carefully-designed random process with respect to the iterate $v_t$, and proving that it is a rate supermartingale assuming the existence of $W$. 
We now apply this result with the martingale $W_t$ for the sequential SGD process that uses the average of $P$ stochastic gradients as an update (so that $\tilde M = M$ in Statement \ref{statementConvexW}). We obtain:
\begin{corollary}
\label{cor:exact_bound}
Assume that we run Algorithm~\ref{algo:topk-sgd-full} for minimizing a convex function $f$ satisfying the listed assumptions. Suppose that the learning rate is set to $\alpha$, with:
\begin{equation*}
    \alpha < \min \left\{\frac{2c\epsilon}{M^2}, \frac{2\left(c\epsilon - \sqrt{\epsilon}MC'\right)}{M^2}\right\}.
\end{equation*}
Then for any $T > 0$ the probability that $v_i \not\in S$ for all $i \leq T$ is:
\begin{equation}
\label{eqn:exact_bound}
\textrm{Pr}\left(F_T\right) \leq \frac{\epsilon}{\left(2\alpha c\epsilon - \alpha^2 M^2 - \alpha 2\sqrt{\epsilon}MC' \right)T}\log\left(\frac{e\|v_0 - x^*\|^2}{\epsilon}\right).
\end{equation}
\end{corollary}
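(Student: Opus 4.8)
The plan is to obtain this as a direct specialization of Theorem~\ref{thm:convex_case}, instantiating the abstract rate supermartingale $W$ with the explicit one from Statement~\ref{statementConvexW}. Since the relevant sequential process is SGD whose update is the average $\tilde G_t(v_t) = \frac{1}{P}\sum_{p=1}^P \tilde G_t^p(v_t)$ of the $P$ node gradients, assumption~\eqref{eqnConvexNoise} bounds its second moment by $M^2$, so I would set $\tilde M = M$ throughout. Statement~\ref{statementConvexW} then supplies a rate supermartingale with horizon $B = \infty$ and Lipschitz constant $H = 2\sqrt{\epsilon}\left(2\alpha c\epsilon - \alpha^2 M^2\right)^{-1}$, so Theorem~\ref{thm:convex_case} applies for every $T > 0$.

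First I would check that the two stated bounds on $\alpha$ are precisely the hypotheses of the theorem. The constraint $\alpha < 2c\epsilon / M^2$ guarantees $2\alpha c\epsilon - \alpha^2 M^2 > 0$, which is what makes $W$ nonnegative and $H$ well defined and positive. The remaining hypothesis $\alpha H M C' < 1$ I would rewrite, using the explicit value of $H$, as $2\alpha\sqrt{\epsilon} M C' < 2\alpha c\epsilon - \alpha^2 M^2$; dividing by $\alpha > 0$ and rearranging yields exactly $\alpha < 2\left(c\epsilon - \sqrt{\epsilon} M C'\right)/M^2$, the second bound in the corollary. Hence the $\min$ over the two quantities is precisely the admissible range for $\alpha$.

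Next I would substitute into the conclusion $\textrm{Pr}(F_T) \le \Exv{W_0(v_0)}/\left((1 - \alpha H M C')T\right)$ of Theorem~\ref{thm:convex_case}. Writing $1 - \alpha H M C'$ as the single fraction $\left(2\alpha c\epsilon - \alpha^2 M^2 - 2\alpha\sqrt{\epsilon} M C'\right)/\left(2\alpha c\epsilon - \alpha^2 M^2\right)$, and evaluating $W_0(v_0) = \frac{\epsilon}{2\alpha c\epsilon - \alpha^2 M^2}\log\!\left(e\|v_0 - x^*\|^2 \epsilon^{-1}\right)$ (the additive $t$ term vanishes at $t=0$), the common factor $2\alpha c\epsilon - \alpha^2 M^2$ cancels between numerator and denominator. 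What remains is exactly the claimed bound~\eqref{eqn:exact_bound}.

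There is no genuine obstacle here beyond bookkeeping, since Theorem~\ref{thm:convex_case} does all the analytic work; the corollary is a substitution. The only points I would take care to justify explicitly are that $\tilde M = M$ is the correct second-moment bound for the averaged gradient (so that $H$ and the denominator match the statement of the corollary), and that the algebraic rearrangement of $\alpha H M C' < 1$ reproduces the second $\alpha$-constraint rather than a weaker inequality. Making those two checks and then performing the cancellation closes the argument.
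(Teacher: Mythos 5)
Your proposal is correct and follows essentially the same route as the paper: instantiate Theorem~\ref{thm:convex_case} with the explicit supermartingale of Statement~\ref{statementConvexW} (with $\tilde M = M$ for the averaged gradient), verify that $\alpha H M C' < 1$ rearranges to the second learning-rate constraint, and cancel the common factor $2\alpha c\epsilon - \alpha^2 M^2$ after substituting $H$ and $\Exvsmall{W_0(v_0)}$. The paper's proof performs exactly this substitution and cancellation, so no further comment is needed.
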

Note that, compared to the sequential case (Statement \ref{statementConvexW}), the convergence rate for the TopK algorithm features a slowdown of $\alpha 2\sqrt{\epsilon}MC'$. Assuming that $P$ is constant with respect to $n / K$, $$C' = \left(\sqrt{\frac{n-K}{n}} + \frac{\xi}{P}\right)\frac{2}{1-\sqrt{\frac{n-K}{n}}} = 2\frac{n}{K}\left(\sqrt{\frac{n-K}{n}} + \frac{\xi}{P}\right)\left(1+\sqrt{\frac{n-K}{n}}\right) = \mathcal{O}\left(\frac{n}{K}\right). $$ Hence, the slowdown is linear in $n/K$ and $\xi/P$. In particular, the effect of $\xi$ is dampened by the number of nodes.


\section{Analysis for the Non-Convex Case}

We now consider the more general case when SGD is minimizing a (not necessarily convex) function $f$, using SGD with (decreasing) step sizes $\alpha_t$. Again, we assume that the bound (\ref{eqnConvexNoise}) holds. We also assume that $f$ is L-Lipschitz smooth.

As is standard in non-convex settings~\cite{liu2015asynchronous}, we settle for a weaker notion of convergence, namely:
$$\min_{t\in \{1,\ldots,T\}} \Exv{\|\nabla f\left(v_t\right)\|^2} \stackrel{T \rightarrow \infty}{\longrightarrow} \, 0,$$

\noindent that is, the algorithm converges ergodically to a point where gradients are $0$. Our strategy will be to leverage the  bound on the difference between the ``real'' model $x_t$ and the view $v_t$ observed at iteration $t$ to bound the expected value of $f(v_{t})$, which in turn will allow us to bound
$$
\frac{1}{\sum_{t=1}^{T}\alpha_t}\sum_{t=1}^{T}\alpha_t\Exv{\|\nabla f\left(v_t\right)\|^2},
$$
\noindent where the parameters $\alpha_t$ are appropriately chosen \emph{decreasing} learning rate parameters. We start from:


\begin{lemma}
\label{lemma_for_noncenvex}
For any time $t\geq 1$:
$\|v_t - x_t\|^2 \leq \left(1 + \frac{\xi}{P\gamma}\right)^2\sum_{k=1}^t \left(2\gamma^2\right)^{k}\|x_{t-k+1} - x_{t-k}\|^2.$
\end{lemma}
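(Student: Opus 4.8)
The plan is to derive the squared bound from a \emph{one-step} recursion for $\epsilon_t = v_t - x_t$, rather than squaring the already-unrolled estimate of Lemma~\ref{lem:v_t_minus_x_t} directly. Writing $S_t = \alpha \tilde G_{t-1}(v_{t-1}) + \epsilon_{t-1} = \frac{1}{P}\sum_{p=1}^P (\alpha \tilde G_{t-1}^p(v_{t-1}) + \epsilon_{t-1}^p)$, the same decomposition used for Lemma~\ref{lem:v_t_minus_x_t} gives
\[
\epsilon_t = \bigl(S_t - \mathsf{TopK}(S_t)\bigr) + \Bigl(\mathsf{TopK}(S_t) - \tfrac{1}{P}\sum_{p=1}^P \mathsf{TopK}(\alpha \tilde G_{t-1}^p(v_{t-1}) + \epsilon_{t-1}^p)\Bigr).
\]
I would bound the first term by $\gamma\|S_t\| \le \gamma\|x_t - x_{t-1}\| + \gamma\|\epsilon_{t-1}\|$ (using the $\gamma$-contraction of $\mathsf{TopK}$, the triangle inequality, and $x_t - x_{t-1} = -\alpha \tilde G_{t-1}(v_{t-1})$), and the second term by Assumption~\ref{assumption1}. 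This yields the one-step recursion
\[
\|v_t - x_t\| \le \Bigl(\gamma + \tfrac{\xi}{P}\Bigr)\|x_t - x_{t-1}\| + \gamma\,\|v_{t-1} - x_{t-1}\|,
\]
which is exactly the recursion whose unrolling produces Lemma~\ref{lem:v_t_minus_x_t}.

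Next I would square this recursion, but crucially using the inequality $(a+b)^2 \le 2a^2 + 2b^2$ rather than Cauchy--Schwarz, giving
\[
\|v_t - x_t\|^2 \le 2\Bigl(\gamma + \tfrac{\xi}{P}\Bigr)^2\|x_t - x_{t-1}\|^2 + 2\gamma^2\,\|v_{t-1} - x_{t-1}\|^2.
\]
Setting $u_t = \|v_t - x_t\|^2$, this is a linear recursion $u_t \le 2(\gamma + \xi/P)^2\|x_t - x_{t-1}\|^2 + 2\gamma^2 u_{t-1}$ with $u_0 = 0$. Unrolling it by induction on $t$ produces a geometric sum with ratio $2\gamma^2$,
\[
\|v_t - x_t\|^2 \le 2\Bigl(\gamma + \tfrac{\xi}{P}\Bigr)^2 \sum_{k=1}^t (2\gamma^2)^{k-1}\|x_{t-k+1} - x_{t-k}\|^2,
\]
and the constant simplifies via $2(\gamma + \xi/P)^2(2\gamma^2)^{k-1} = (\gamma + \xi/P)^2\gamma^{-2}(2\gamma^2)^k = (1 + \tfrac{\xi}{P\gamma})^2(2\gamma^2)^k$, which is exactly the claimed bound.

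The step I expect to matter most is the choice of squaring inequality: Cauchy--Schwarz would distribute the weight as $(\sum_k \gamma^k)(\sum_k \gamma^k a_k^2)$ and not reproduce the clean per-term factor $(2\gamma^2)^k$; it is the factor-$2$ split $(a+b)^2 \le 2a^2 + 2b^2$ that makes the linear recursion telescope into the stated geometric series. The only bookkeeping subtlety is to check that when the recursion is unrolled the driving terms relabel correctly, so that the index $k$ ends up running over the consecutive increments $\|x_{t-k+1} - x_{t-k}\|^2$; since $u_0 = 0$ this is immediate by induction.
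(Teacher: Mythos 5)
Your proof is correct and follows essentially the same route as the paper's: both start from the one-step recursion $\|v_{t+1}-x_{t+1}\| \le \gamma\|v_t-x_t\| + (\gamma+\xi/P)\|x_{t+1}-x_t\|$ established for Lemma~\ref{lem:v_t_minus_x_t}, square it via $(a+b)^2 \le 2a^2+2b^2$, and unroll the resulting linear recursion with $u_0=0$, with the same constant simplification $2(\gamma+\xi/P)^2(2\gamma^2)^{k-1} = (1+\tfrac{\xi}{P\gamma})^2(2\gamma^2)^k$. Your remark about why the factor-$2$ split (rather than Cauchy--Schwarz) is the right tool matches the paper's implicit choice exactly.
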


\noindent We will leverage this bound on the gap to prove the following general bound:

\begin{theorem}
\label{theorem_nonconvex}
Consider the TopK algorithm for minimising a function $f$ that satisfies the  assumptions in this section. Suppose that the learning rate sequence and $K$ are chosen so that for any time $t>0$:
\begin{equation}
\label{non_convex_assumption_main}
\sum_{k=1}^{t}\left(2\gamma^2\right)^k\frac{\alpha_{t-k}^2}{\alpha_{t}} \leq D
\end{equation}
for some constant $D>0$. Then, after running Algorithm 1 for $T$ steps:
\begin{equation}
\begin{split}
\label{result_non_convex}
    \frac{1}{\sum_{t=1}^{T}\alpha_t}\sum_{t=1}^{T}\alpha_t\Exv{\|\nabla f\left(v_t\right)\|^2} & \leq \frac{4\left(f\left(x_0\right) - f\left(x^{*}\right)\right)}{\sum_{t=1}^{T}\alpha_t} \\ & + \frac{\left(2LM^2 + 4L^2M^2\left(1 + \frac{\xi}{P\gamma}\right)^2D\right)\sum_{t=1}^{T}\alpha_t^2}{\sum_{t=1}^{T}\alpha_t}.
\end{split}
\end{equation}
\end{theorem}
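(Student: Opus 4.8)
The plan is to run the entire argument on the auxiliary sequence $x_t$ rather than directly on $v_t$, because $x_t$ has clean increments $x_{t+1} - x_t = -\alpha_t \tilde{G}_t(v_t)$, and only at the end translate the resulting guarantee into a statement about $\nabla f(v_t)$ using the gap bound of Lemma~\ref{lemma_for_noncenvex}. First I would apply $L$-smoothness along the $x$-trajectory to obtain the descent inequality $f(x_{t+1}) \le f(x_t) - \alpha_t \langle \nabla f(x_t), \tilde{G}_t(v_t)\rangle + \frac{L\alpha_t^2}{2}\norm{\tilde{G}_t(v_t)}^2$. Taking expectation conditioned on the history (under which $x_t$ and $v_t$ are both fixed), unbiasedness gives $\Exv{\tilde{G}_t(v_t)} = \nabla f(v_t)$ in the cross term, while the second-moment bound~\eqref{eqnConvexNoise} controls the quadratic term by $M^2$.

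The decisive per-step manipulation is to convert the mismatched inner product $\langle \nabla f(x_t), \nabla f(v_t)\rangle$ into a negative multiple of $\norm{\nabla f(v_t)}^2$ plus a controllable error, since $f$ is evaluated at $v_t$ but the step is taken at $x_t$. I would write $\nabla f(x_t) = \nabla f(v_t) + \bigl(\nabla f(x_t) - \nabla f(v_t)\bigr)$, so that the surplus is $\langle \nabla f(x_t) - \nabla f(v_t), \nabla f(v_t)\rangle$; bounding it with Cauchy--Schwarz, Young's inequality, and $L$-smoothness (which yields $\norm{\nabla f(x_t) - \nabla f(v_t)} \le L\norm{x_t - v_t}$) produces a per-step estimate of the form $\frac{\alpha_t}{4}\Exv{\norm{\nabla f(v_t)}^2} \le \Exv{f(x_t)} - \Exv{f(x_{t+1})} + L^2 \alpha_t \Exv{\norm{x_t - v_t}^2} + \frac{L M^2}{2}\alpha_t^2$, where the exact constants reflect the weight chosen in Young's inequality. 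Summing over $t$, telescoping the function-value differences, and using $f(x_{T+1}) \ge f(x^*)$ then produces the term $4(f(x_0) - f(x^*))$ together with the $2LM^2\sum_t\alpha_t^2$ contribution after multiplying through.

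It remains to bound the accumulated gap $\sum_{t=1}^T \alpha_t \Exv{\norm{x_t - v_t}^2}$, which is where Lemma~\ref{lemma_for_noncenvex} and the step-size condition~\eqref{non_convex_assumption_main} enter. Substituting the lemma and replacing each increment by $\norm{x_{t-k+1} - x_{t-k}}^2 = \alpha_{t-k}^2 \norm{\tilde{G}_{t-k}(v_{t-k})}^2$, whose conditional expectation is at most $M^2$ by~\eqref{eqnConvexNoise}, reduces the gap to $(1 + \frac{\xi}{P\gamma})^2 M^2 \sum_t \alpha_t \sum_{k=1}^t (2\gamma^2)^k \alpha_{t-k}^2$. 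The key maneuver is to factor $\alpha_t \alpha_{t-k}^2 = \alpha_t^2 \cdot \frac{\alpha_{t-k}^2}{\alpha_t}$ so that assumption~\eqref{non_convex_assumption_main} collapses the inner sum to the constant $D$, leaving $(1 + \frac{\xi}{P\gamma})^2 M^2 D \sum_t \alpha_t^2$; dividing by $\sum_t \alpha_t$ then gives the stated bound. I expect this final step to be the main obstacle: one must interchange the order of summation and apply the tower property so that the $M^2$ bound can be invoked on each historical increment independently, and then recognize precisely the ratio $\frac{\alpha_{t-k}^2}{\alpha_t}$ appearing in~\eqref{non_convex_assumption_main} — the geometric weights $(2\gamma^2)^k$ only yield a finite, $D$-bounded contribution because the decreasing step sizes interact with them exactly through that assumption.
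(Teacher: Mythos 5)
Your proposal is correct and follows essentially the same route as the paper's proof: apply $L$-smoothness along the $x_t$-trajectory, take conditional expectations to replace $\tilde G_t(v_t)$ by $\nabla f(v_t)$ and invoke the second-moment bound, control the mismatched cross term through $\|\nabla f(x_t)-\nabla f(v_t)\|\le L\|x_t-v_t\|$, and collapse the accumulated gap using Lemma~\ref{lemma_for_noncenvex} together with condition~\eqref{non_convex_assumption_main} via exactly the factorization $\alpha_t\alpha_{t-k}^2=\alpha_t^2\,(\alpha_{t-k}^2/\alpha_t)$. The only cosmetic difference is that you bound the cross term with Cauchy--Schwarz and Young's inequality, whereas the paper uses the exact polarization identity followed by $\|\nabla f(v_t)\|^2\le 2\|\nabla f(x_t)\|^2+2L^2\|v_t-x_t\|^2$; with the appropriate Young weight both yield the same constants.
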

Note that once again the effect of $\xi$ in the bound is dampened by $P$. One can show that inequality (\ref{non_convex_assumption_main}) holds whenever $K = cn$ for some constant $c > \frac{1}{2}$ and the step sizes are chosen so that $\alpha_t = t^{-\theta}$ for a constant $\theta > 0$. When $K = cn$ with $c > \frac{1}{2}$, a constant learning rate depending on the number of iterations $T$ can also be used to ensure ergodic convergence. We refer the reader to Appendix B for a complete derivation.

\section{Discussion and Experimental Validation}

\label{sec:discussion}

\begin{figure}%
\centering
\subfigure[Empirical $\xi$ logistic/RCV1.]{%
\label{fig:xi_rcv1}%
\includegraphics[width=0.31\textwidth]{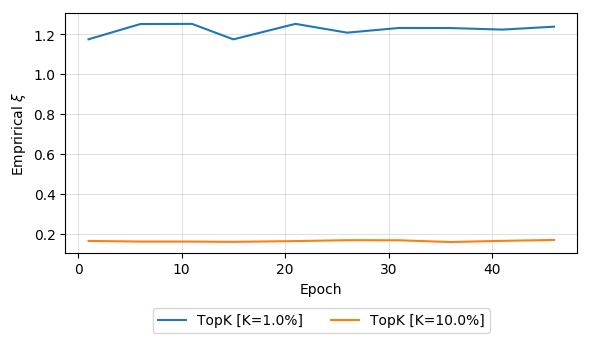}}%
\quad
\subfigure[Empirical $\xi$ synthetic.]{%
\label{fig:xi_synthetic}%
\includegraphics[width=0.31\textwidth]{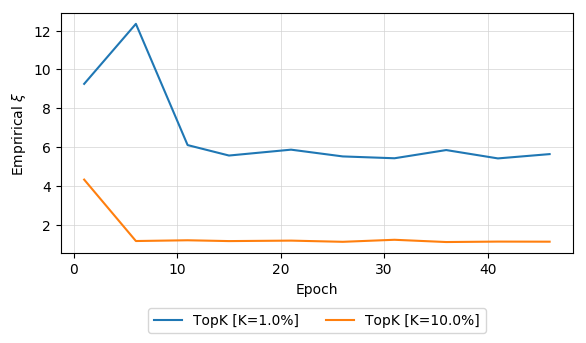}}%
\quad
\subfigure[Empirical $\xi$ ResNet110.]{%
    \label{fig:xi_resnet}%
    \includegraphics[width=0.31\textwidth]{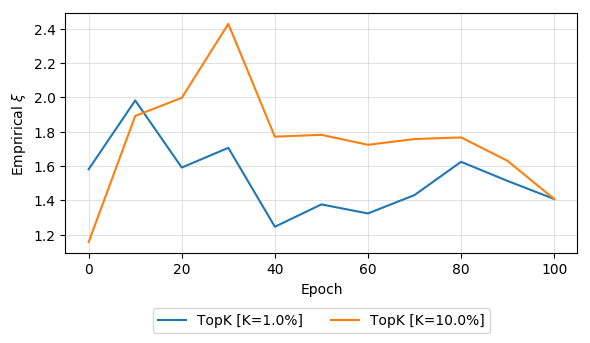}}%
\caption{Validating Assumption~\ref{assumption1} on various models and datasets.
}
\label{fig:assumption-check}
\end{figure}

\paragraph{The Analytic Assumption.} We start by empirically validating Assumption~\ref{assumption1} in Figure~\ref{fig:assumption-check} on two regression tasks (a synthetic linear regression task of dimension 1,024, and logistic regression for text categorization on RCV1~\cite{RCV1}), as well as ResNet110~\cite{he2016deep} on CIFAR-10~\cite{krizhevsky2009learning}. Exact descriptions of the experimental setup are given in Appendix C. 
Specifically, we sample gradients at different epochs during the training process, and bound the constant $\xi$  by comparing the left and right-hand sides of Equation (\ref{eq:assumption}). The assumption appears to hold with relatively low, stable values of the constant $\xi$. We note that RCV1 is relatively sparse (average density $\simeq 10\%$), while gradients in the other two settings are fully dense. 

\paragraph{Learning Rate and Variance.}  In the convex case, the choice of learning rate must ensure both 
\begin{equation}
\label{eqn:conditions}
2\alpha c\epsilon - \alpha^2 M^2 > 0 \textnormal{ and }\alpha HMC' < 1, \textnormal{ implying } \alpha < \min \left\{\frac{2c\epsilon}{M^2}, \frac{2\left(c\epsilon - \sqrt{\epsilon}MC'\right)}{M^2}\right\}.
\end{equation}

Note that this requires the second term to be positive, that is
$\epsilon >  \left(\frac{MC'}{c}\right)^2.$ Hence, if we aim for convergence within a small region around the optimum, we may need to ensure that gradient variance is bounded, either by minibatching or, empirically, by gradient clipping~\cite{lin2017deep}. 

\paragraph{The Impact of the Parameter $K$ and Gradient ``Shape.''} 
In the convex case, the dependence on the convergence with respect to $K$ and $n$ is encapsulated by the parameter $C' = \mathcal{O}(n / K)$ assuming $P$ is constant. 
Throughout the analysis, we only used worst-case bounds on the norm gap between the gradient and its top $K$ components. 
These bounds are tight in the (unlikely) case where the gradient values are uniformly distributed; however, there is empirical evidence showing that this is not the case in practice~\cite{rastegari2016binarycnn}, suggesting that this gap  
should be smaller. 
The algorithm may implicitly exploit this narrower gap for improved convergence. 
Please see Figure~\ref{fig:norms} for empirical validation of this claim, confirming that the gradient norm is concentrated towards the top  elements. 

\begin{figure}[h!]

\centering
\subfigure[TopK norm RCV1.]{%
\label{fig:norm_rcv1}%
\includegraphics[width=0.31\textwidth]{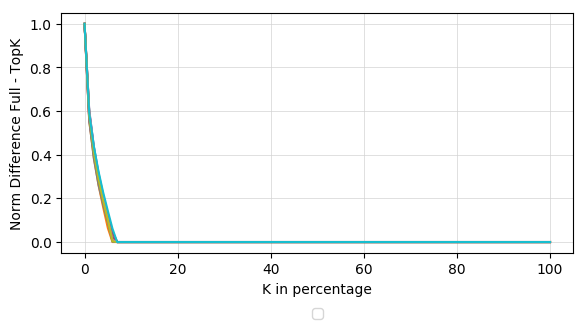}}%
\quad
\subfigure[TopK norm synthetic.]{%
\label{fig:norm_synth}%
\includegraphics[width=0.31\textwidth]{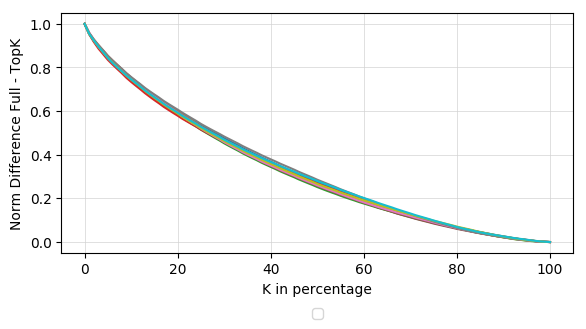}}%
\quad
\subfigure[TopK norm ResNet110.]{%
    \label{fig:norm_resnet}%
    \includegraphics[width=0.32\textwidth]{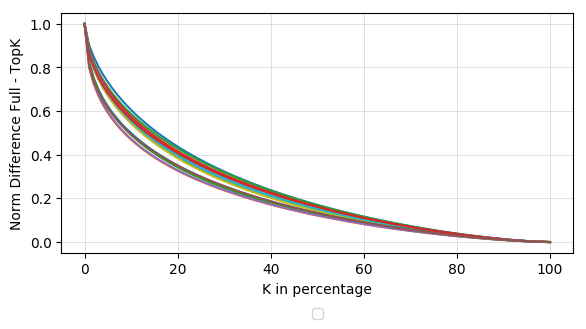}}%
\caption{Examining the value of $\| \tilde G - \mathsf{TopK}( \tilde G ) \| / \| \tilde G \|$ versus  $K$ on various datasets/tasks.}
\label{fig:norms}
\end{figure}

In the non-convex case, the condition $K = c n$ with $c > 1 /2$ is quite restrictive. Again, the  condition is required since we are assuming the worst-case configuration (uniform values) for the gradients, in which case the bound in Lemma~\ref{lemma_for_noncenvex} is tight. 
However, we argue that in practice gradients are unlikely to be uniformly distributed; in fact, empirical studies~\cite{rastegari2016binarycnn} have noticed that usually gradient components are normally distributed, which should enable us to improve this lower bound on $c$.

\paragraph{Comparison with SGD Variants.} 
We first focus on the convex case. 
We note that, when $K$ is a constant fraction of $n$, the convergence of the TopK algorithm is essentially dictated by the Lipschitz constant of the supermartingale $W$, and by the second-moment bound $M$, and will be similar to  sequential SGD. Please see Figure~\ref{fig:convergence} for an empirical validation of this fact. 

\begin{figure}[h!]
\centering
\subfigure[RCV1 convergence.]{%
\label{fig:norm_rcv1}%
\includegraphics[width=0.31\textwidth]{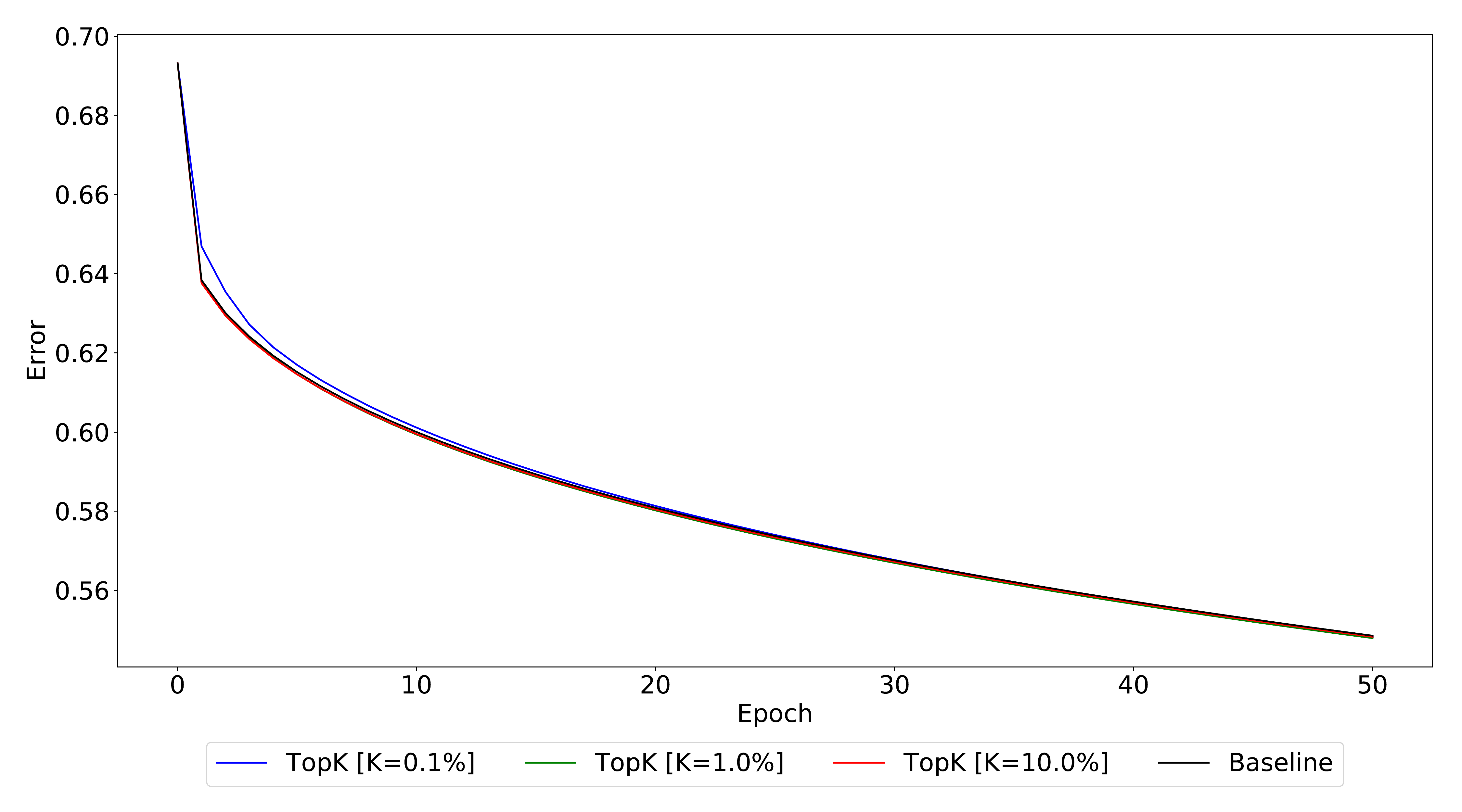}}%
\quad
\subfigure[Linear regression.]{%
\label{fig:norm_synth}%
\includegraphics[width=0.31\textwidth]{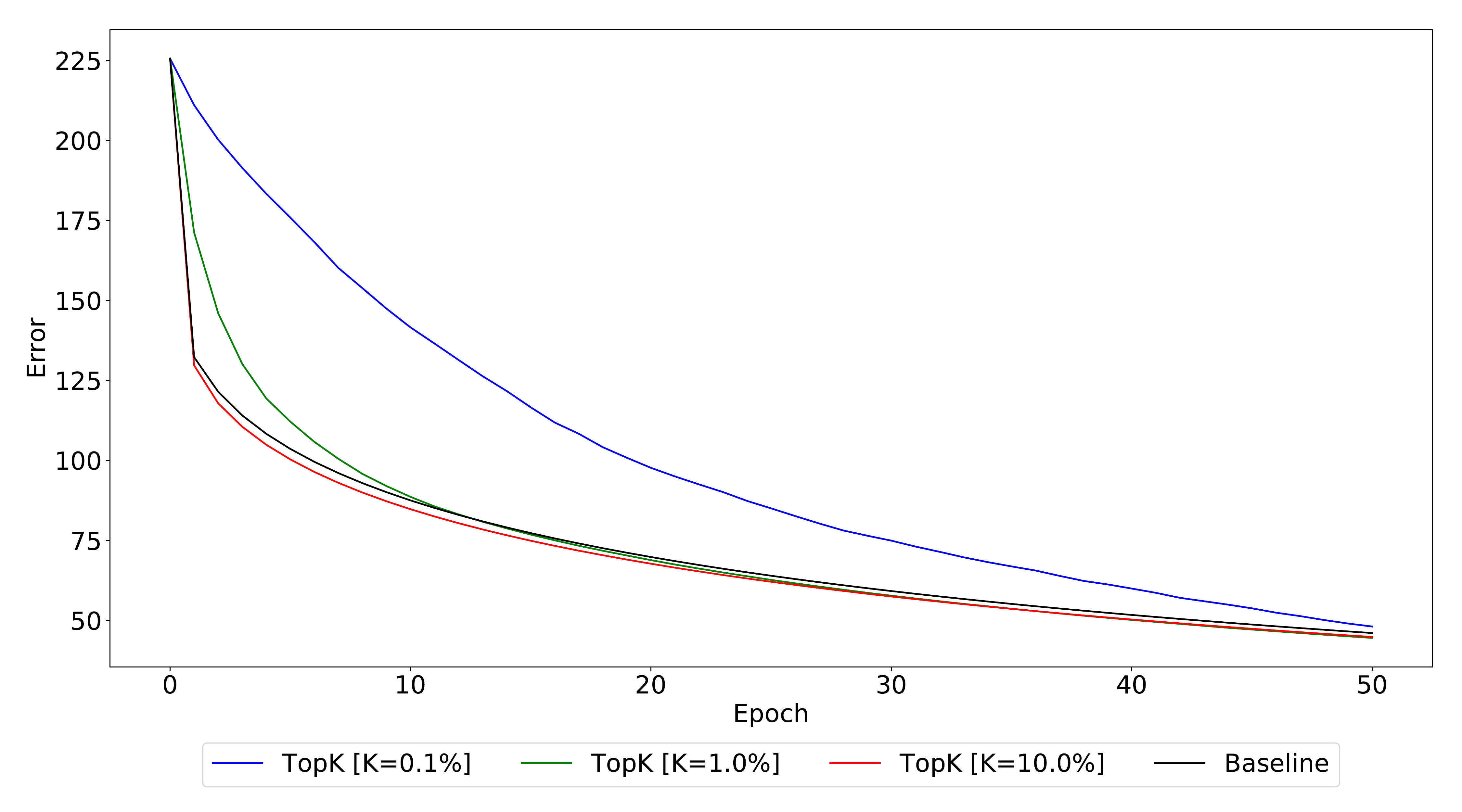}}%
\quad
\subfigure[ResNet110 on CIFAR10.]{%
    \label{fig:norm_resnet}%
    \includegraphics[width=0.32\textwidth]{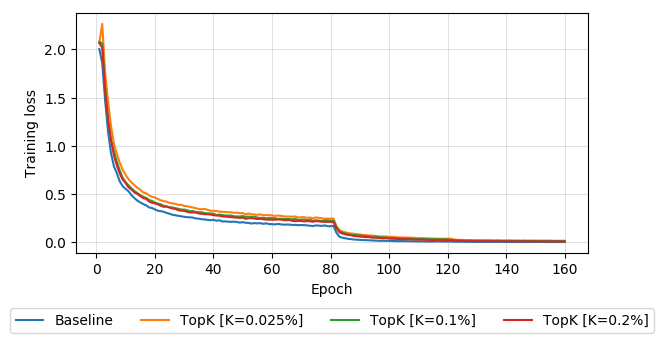}}%
\caption{Examining convergence versus value of $K$ on various datasets and tasks.}
\label{fig:convergence}
\end{figure}

Compared to asynchronous SGD, the convergence rate of the TopK algorithm is basically that of an asynchronous algorithm with maximum delay $\tau = \mathcal{O}( \sqrt n / K)$.
That is because an asynchronous algorithm with dense updates and max delay $\tau$ has a convergence slowdown of $\Theta(\tau \sqrt{n})$~\cite{de2015taming, lian2015asynchronous, alistarh2018convergence}. 
We note that, for large sparsity ($0.1\%$---$1\%$), there is a noticeable convergence slowdown, as predicted by the theory. 

The worst-case convergence of TopK is similar to SGD with stochastic quantization, e.g.,~\cite{alistarh2016qsgd, wangni2017gradient}:
for instance, for $K = \sqrt{n}$, the worst-case convergence slowdown is $\mathcal{O}(\sqrt n)$, which is the same as QSGD~\cite{alistarh2016qsgd}. 
The TopK procedure is arguably simpler to implement than the parametrized quantization and encoding techniques required to make stochastic quantization behave well~\cite{alistarh2016qsgd}. In our experiments, TopK had superior convergence rate when compared to stochastic quantization /  sparsification~\cite{alistarh2016qsgd, wangni2017gradient} given the same communication budget per node.

\section{Conclusions}

We provide the first theoretical analysis of the popular ``TopK'' sparsification-based communication reduction technique. 
We believe that our general approach extends to methods combining sparsification with quantization by reduced precision~\cite{aji2017sparse,strom2015scalable} and methods using approximate quantiles~\cite{aji2017sparse, lin2017deep}.
 The practical scalability potential of TopK SGD (while preserving model accuracy) has been exhaustively validated in previous work; therefore, we did not reiterate such experiments. 
Our work provides a theoretical foundation for empirical results shown with large-scale experiments on recurrent neural networks on production-scale speech and neural machine translation tasks~\cite{strom2015scalable, aji2017sparse}, respectively, and for image classification on MNIST, CIFAR-10, and ImageNet using convolutional neural networks~\cite{dryden2016communication, lin2017deep}.

\bibliography{references}
\bibliographystyle{plain}

\newpage

\appendix

\section{Analysis for the Convex Case}

\setcounter{theorem}{0}
\setcounter{lemma}{0}
\setcounter{corollary}{0}

\begin{lemma}
\label{lem:v_t_minus_x_t}
With the processes $x_t$ and $v_t$ defined as above:
\begin{equation}
\label{eqn:v_t_minus_x_t}
\begin{split}
    \|v_t - x_t\| & = \|\frac{1}{P} \sum_{p = 1}^P \left( \alpha\tilde G_{t-1}^p (v_{t-1}) + \epsilon_{t-1}^p \right)  - \frac{1}{P} \sum_{p = 1}^P \mathsf{TopK} \left( \alpha\tilde G_{t-1}^p(v_{t-1}) + \epsilon_{t-1}^p  \right)\| \\ & \leq \left(\gamma + \frac{\xi}{P}\right)\sum_{k=1}^{t}\gamma^{k-1} \|x_{t-k+1} - x_{t-k}\|.
\end{split}
\end{equation}
\end{lemma}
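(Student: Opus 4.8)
The plan is to first verify the displayed equality, and then derive the bound by setting up and unrolling a one-step recursion for the error norm $e_t := \|v_t - x_t\|$. For the equality, I would invoke the identity $v_t - x_t = \epsilon_t$ from~\eqref{eq:eps} together with $\epsilon_t = \frac{1}{P}\sum_p \epsilon_t^p$. Unfolding the algorithm's update rule $\epsilon_t^p = acc_t^p - \mathsf{TopK}(acc_t^p)$ with $acc_t^p = \epsilon_{t-1}^p + \alpha\tilde G_{t-1}^p(v_{t-1})$, and then averaging over $p$, immediately gives $\epsilon_t = \frac{1}{P}\sum_p(\alpha\tilde G_{t-1}^p(v_{t-1}) + \epsilon_{t-1}^p) - \frac{1}{P}\sum_p\mathsf{TopK}(\alpha\tilde G_{t-1}^p(v_{t-1}) + \epsilon_{t-1}^p)$, which is exactly the first line of the claim.

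For the inequality, write $A_t := \frac{1}{P}\sum_p(\alpha\tilde G_{t-1}^p(v_{t-1}) + \epsilon_{t-1}^p) = \alpha\tilde G_{t-1}(v_{t-1}) + \epsilon_{t-1}$, so that $\epsilon_t = A_t - \frac{1}{P}\sum_p\mathsf{TopK}(acc_t^p)$. I would split this by inserting $\mathsf{TopK}(A_t)$ and using the triangle inequality, obtaining $\|\epsilon_t\| \le \|A_t - \mathsf{TopK}(A_t)\| + \|\mathsf{TopK}(A_t) - \frac{1}{P}\sum_p\mathsf{TopK}(acc_t^p)\|$. The first term is bounded by $\gamma\|A_t\|$ via the technical preliminary $\|x - \mathsf{TopK}(x)\| \le \gamma\|x\|$. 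The second term is \emph{precisely} the quantity controlled by Assumption~\ref{assumption1} evaluated at time $t-1$ (after passing the normalization $1/P$ through the positively homogeneous operator $\mathsf{TopK}$), and so it contributes the $\frac{\xi}{P}$-term, bounded by a multiple of $\|\alpha\tilde G_{t-1}(v_{t-1})\|$.

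To close the recursion I would use $\|\alpha\tilde G_{t-1}(v_{t-1})\| = \|x_t - x_{t-1}\|$, which follows from $x_t = x_{t-1} - \alpha\tilde G_{t-1}(v_{t-1})$ in~\eqref{def:x}, together with $\|A_t\| \le \|\alpha\tilde G_{t-1}(v_{t-1})\| + \|\epsilon_{t-1}\| = \|x_t - x_{t-1}\| + e_{t-1}$, where $\|\epsilon_{t-1}\| = \|v_{t-1} - x_{t-1}\| = e_{t-1}$ again by~\eqref{eq:eps}. Collecting the two pieces yields the one-step recursion $e_t \le \gamma\, e_{t-1} + \left(\gamma + \frac{\xi}{P}\right)\|x_t - x_{t-1}\|$. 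With the base case $e_0 = \|v_0 - x_0\| = 0$ (since $v_0 = x_0 = \vec 0$), unrolling this linear recursion produces the geometric sum $e_t \le \left(\gamma + \frac{\xi}{P}\right)\sum_{k=1}^{t} \gamma^{k-1}\|x_{t-k+1} - x_{t-k}\|$, where the re-indexing $k = j+1$ matches the stated form.

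I expect the crux to be the one-step bound rather than the unrolling: the key choices are inserting $\mathsf{TopK}(A_t)$ so that the two resulting pieces align exactly with the $\gamma$-truncation bound and with Assumption~\ref{assumption1}, and then re-expressing $\|A_t\|$ through $\|x_t - x_{t-1}\|$ and the \emph{previous} error $e_{t-1}$ so that the inequality becomes genuinely recursive. Obtaining the $1/P$ normalization on $\xi$ (rather than a bare $\xi$) relies on the homogeneity of $\mathsf{TopK}$ and the precise normalization in Assumption~\ref{assumption1}; once the recursion is established, the geometric unrolling and re-indexing are routine.
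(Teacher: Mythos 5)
Your proposal is correct and matches the paper's own argument essentially step for step: the same insertion of $\mathsf{TopK}\bigl(\tfrac{1}{P}\sum_p acc_t^p\bigr)$, the same use of the $\gamma$-truncation bound plus Assumption~\ref{assumption1} to obtain the one-step recursion $e_t \le \gamma e_{t-1} + \bigl(\gamma + \tfrac{\xi}{P}\bigr)\|x_t - x_{t-1}\|$, and the same geometric unrolling (you additionally make the base case $e_0=0$ explicit). The only caveat, which you inherit from the paper's own derivation, is the justification of the $\tfrac{\xi}{P}$ factor: by positive homogeneity of $\mathsf{TopK}$ the second term of the decomposition is exactly the left-hand side of Assumption~\ref{assumption1} and is therefore bounded by $\xi\|\alpha\tilde G_{t-1}(v_{t-1})\|$ rather than $\tfrac{\xi}{P}\|\alpha\tilde G_{t-1}(v_{t-1})\|$, so the extra $1/P$ does not actually follow from homogeneity as you claim.
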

\begin{proof}
First, we obtain a recursive relation of the form:
\begin{align*}
    \left\Vert  v_{t+1} - x_{t+1} \right\Vert = & 
    \left\Vert v_t  - x_t + \frac{1}{P} \sum_{p = 1}^P \left( \alpha\tilde G_t^p (v_t) + \epsilon_t^p \right) - \epsilon_t 
    - \frac{1}{P} \sum_{p = 1}^P \mathsf{TopK} \left( \alpha\tilde G_t^p(v_t) + \epsilon_t^p  \right) \right\Vert \\
    \stackrel{(\ref{eq:eps})}{=} & \left\Vert  \frac{1}{P} \sum_{p = 1}^P \left( \alpha\tilde G_t^p (v_t) + \epsilon_t^p \right)  - \frac{1}{P} \sum_{p = 1}^P \mathsf{TopK} \left( \alpha\tilde G_t^p(v_t) + \epsilon_t^p  \right)   \right\Vert \\
    = & \|  \frac{1}{P} \sum_{p = 1}^P \left( \alpha\tilde G_t^p (v_t) + \epsilon_t^p \right) 
    - \frac{1}{P} \mathsf{TopK} \left( \sum_{p = 1}^P  \left( \alpha\tilde G_t^p(v_t) + \epsilon_t^p  \right) \right) + \\ 
    + & \frac{1}{P} \mathsf{TopK} \left( \sum_{p = 1}^P  \left( \alpha\tilde G_t^p(v_t) + \epsilon_t^p  \right) \right) 
    - \frac{1}{P} \sum_{p = 1}^P \mathsf{TopK} \left( \alpha\tilde G_t^p(v_t) + \epsilon_t^p  \right)  \|  \\
    \leq &  \frac{\gamma}{P} \|  \sum_{p = 1}^P \left( \alpha\tilde G_t^p (v_t) + \epsilon_t^p \right) \| + 
    \frac{\xi}{P} \| \alpha\tilde G_t (v_t)  \| \\ 
    = & \gamma \|   \alpha\tilde G_t (v_t) + v_t - x_t \| + 
    \frac{\xi}{P} \| \alpha\tilde G_t (v_t)  \| \\ \leq & \gamma \|v_t - x_t\| + \left(\gamma + \frac{\xi}{P}\right)\|x_{t+1} - x_t\|
\end{align*}
Iterating this downwards yields the result.
\end{proof}
\noindent Next, we use the previous result to bound a quantity that represents the difference between the updates based on the TopK procedure and those based on full gradients.
\begin{lemma}
\label{lem:bound_for_convex}
Under the assumptions above and with expectation taken with respect to the gradients at time $t$:
\begin{equation}
\label{eqn:bound_for_convex}
\begin{split}
     \Exv{\|\frac{1}{P} \sum_{p = 1}^P \left( \alpha\tilde G_t^p (v_t) \right)  - \frac{1}{P} \sum_{p = 1}^P \mathsf{TopK} \left( \alpha\tilde G_t^p(v_t) + \epsilon_t^p  \right)\|} & \leq \left(\gamma + 1\right)\left(\gamma + \frac{\xi}{P}\right)\sum_{k=1}^{t}\gamma^{k-1} \|x_{t-k+1} - x_{t-k}\| \\ & + \left(\gamma + \frac{\xi}{P}\right)\alpha M
\end{split}
\end{equation}
\end{lemma}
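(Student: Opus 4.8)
The plan is to relate the quantity in (\ref{eqn:bound_for_convex}) to two consecutive instances of the gap $\|v - x\|$ that Lemma~\ref{lem:v_t_minus_x_t} already controls. Write $A_t = \frac{1}{P}\sum_{p=1}^P\left(\alpha\tilde G_t^p(v_t) + \epsilon_t^p\right)$ and $B_t = \frac{1}{P}\sum_{p=1}^P \mathsf{TopK}\left(\alpha\tilde G_t^p(v_t) + \epsilon_t^p\right)$. The recursion derived inside the proof of Lemma~\ref{lem:v_t_minus_x_t} shows precisely that $\|A_t - B_t\| = \|v_{t+1} - x_{t+1}\|$, while (\ref{eq:eps}) gives $\frac{1}{P}\sum_{p=1}^P \epsilon_t^p = \epsilon_t = v_t - x_t$. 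Adding and subtracting $A_t$ inside the norm on the left of (\ref{eqn:bound_for_convex}) and applying the triangle inequality then yields
\begin{equation*}
\left\| \frac{1}{P}\sum_{p=1}^P \alpha\tilde G_t^p(v_t) - B_t \right\| \le \left\| \frac{1}{P}\sum_{p=1}^P \epsilon_t^p \right\| + \|A_t - B_t\| = \|v_t - x_t\| + \|v_{t+1} - x_{t+1}\|.
\end{equation*}

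First I would apply Lemma~\ref{lem:v_t_minus_x_t} to each of the two gaps. Writing $S_t = \sum_{k=1}^t \gamma^{k-1}\|x_{t-k+1} - x_{t-k}\|$, the lemma gives $\|v_t - x_t\| \le \left(\gamma + \frac{\xi}{P}\right) S_t$ and $\|v_{t+1} - x_{t+1}\| \le \left(\gamma + \frac{\xi}{P}\right)S_{t+1}$. A short re-indexing of the defining sum establishes the recursion $S_{t+1} = \|x_{t+1} - x_t\| + \gamma S_t$, so the sum of the two bounds collapses to $\left(\gamma + \frac{\xi}{P}\right)\bigl((1+\gamma)S_t + \|x_{t+1} - x_t\|\bigr)$.

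Finally I would take the expectation over the gradients drawn at step $t$. All increments $\|x_{t-k+1} - x_{t-k}\|$ with $k \ge 1$, and hence $S_t$ itself, are measurable with respect to the history up to time $t$, so they pass through the conditional expectation unchanged; only the leading increment $x_{t+1} - x_t = -\alpha\tilde G_t(v_t)$ carries the fresh randomness. Jensen's inequality together with the second-moment bound (\ref{eqnConvexNoise}) gives $\mathbf{E}\|x_{t+1} - x_t\| \le \alpha\sqrt{\mathbf{E}\|\frac{1}{P}\sum_{p=1}^P\tilde G_t^p(v_t)\|^2} \le \alpha M$, and substituting this in reproduces exactly the two terms on the right of (\ref{eqn:bound_for_convex}).

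I expect the only delicate point to be the bookkeeping in the second step: one has to verify the index shift $S_{t+1} = \|x_{t+1} - x_t\| + \gamma S_t$ carefully, and then, in the final step, to be precise about which quantities are frozen by conditioning on the past and which depend on the step-$t$ randomness, since applying Jensen to the wrong term would fail to isolate the clean $\alpha M$ factor. Everything else is a direct invocation of Lemma~\ref{lem:v_t_minus_x_t} and the triangle inequality.
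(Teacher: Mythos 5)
Your proposal is correct and follows essentially the same route as the paper's proof: both split the left-hand side via the triangle inequality into $\|\epsilon_t\| = \|v_t - x_t\|$ plus the TopK gap $\|A_t - B_t\| = \|v_{t+1} - x_{t+1}\|$, control these via Lemma~\ref{lem:v_t_minus_x_t}, and extract the $\alpha M$ term from the fresh time-$t$ gradient by Jensen and the second-moment bound. The only (immaterial) difference is bookkeeping: the paper expands $\|v_{t+1}-x_{t+1}\|$ one step of the recursion and then unrolls only $\|v_t-x_t\|$, whereas you unroll both gaps fully and re-index via $S_{t+1} = \|x_{t+1}-x_t\| + \gamma S_t$, arriving at the identical bound.
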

\begin{proof}
Using the result from Lemma \ref{lem:v_t_minus_x_t}:
\begin{align*}
    & \Exv{\|\frac{1}{P} \sum_{p = 1}^P \left( \alpha\tilde G_t^p (v_t) \right)  - \frac{1}{P} \sum_{p = 1}^P \mathsf{TopK} \left( \alpha\tilde G_t^p(v_t) + \epsilon_t^p  \right)\|} \\ & \leq \Exv{\|\epsilon_t\|} + \Exv{\|\frac{1}{P} \sum_{p = 1}^P \left( \alpha\tilde G_t^p (v_t) + \epsilon_t^p \right)  - \frac{1}{P} \sum_{p = 1}^P \mathsf{TopK} \left( \alpha\tilde G_t^p(v_t) + \epsilon_t^p \right)\|} \\ & \leq \|\epsilon_t\| + \gamma\|v_t - x_t\| + \left(\gamma + \frac{\xi}{P}\right)\Exv{\|\alpha\tilde G\left(v_t\right)\|} \\ & \leq \left(\gamma + 1\right)\|v_t - x_t\| + \left(\gamma + \frac{\xi}{P}\right)\alpha M \\ & \leq \left(\gamma + 1\right)\left(\gamma + \frac{\xi}{P}\right)\sum_{k=1}^{t}\gamma^{k-1} \|x_{t-k+1} - x_{t-k}\| + \left(\gamma + \frac{\xi}{P}\right)\alpha M.
\end{align*}
\end{proof}
Finally, we introduce some notation. Set $$C = \left(\gamma + 1\right)\left(\gamma + \frac{\xi}{P}\right)\sum_{k=1}^{\infty} \gamma^{k-1} = \frac{1+\gamma}{1-\gamma}\left(\gamma + \frac{\xi}{P}\right),$$
and $$C' = C + \left(\gamma + \frac{\xi}{P}\right) = \left(\gamma + \frac{\xi}{P}\right) \frac{2}{1-\gamma}.$$
Note that $$C' = \left(\sqrt{\frac{n-K}{n}} + \frac{\xi}{P}\right)\frac{2}{1-\sqrt{\frac{n-K}{n}}} = 2\frac{n}{K}\left(\sqrt{\frac{n-K}{n}} + \frac{\xi}{P}\right)\left(1+\sqrt{\frac{n-K}{n}}\right) = \mathcal{O}\left(\frac{n}{K}\right)$$
\subsection{The Main Result}

\noindent We have the following:
\begin{theorem}
\label{thm:convex_case}
Assume that $W$ is a rate supermartingale with horizon $B$ for the sequential SGD algorithm and that $W$ is $H$-Lipschitz in the first coordinate. Assume further that $\alpha HMC' < 1$. Then for any $T\leq B$, the probability that $v_s \not\in S$ for all $s\leq T$ is:
\begin{equation}
\label{bound_on_prob_for_v}
    \textrm{Pr}\left[F_T\right] \leq \frac{\Exv{W_0\left(v_0\right)}}{\left(1 - \alpha HMC'\right)T}.
\end{equation}
\end{theorem}
\begin{proof}
Consider the process, defined by:
\begin{align*}
V_t\left(v_t, \ldots, v_0\right) = W_t\left(v_t, \ldots, v_0\right) - \alpha HMCt + H\Bigg( & \left(\gamma + 1\right)\left(\gamma + \frac{\xi}{P}\right)\sum_{k=1}^{t}\|x_{t-k+1}-x_{t-k}\|\sum_{m=k}^{\infty}\gamma^{m-1}  \\ &  - \left(\gamma + \frac{\xi}{P}\right)\alpha Mt\Bigg),
\end{align*}
if the algorithm has not succeeded by time $t$ (i.e. $x_s \not\in S$ for all $s\leq T$) and by $V_t = V_{u-1}$ otherwise, where $u$ is the minimal index, such that $x_u\in S$.\\
\\
In the case when the algorithm has not succeeded at time $t$, using $W$'s  Lipschitz property:
\begin{align*}
    V_{t+1}\left(v_{t+1}, v_t, \ldots, v_0\right) & =  W_{t+1}\left(v_t - \frac{1}{P}\sum_{p=1}^{P}\textrm{TopK}\left(\epsilon_t^p + \alpha \tilde G^p\left(v_t\right)\right), v_t, \ldots, v_0\right) - \alpha HMC\left(t+1\right) \\ & + H\left(\left(\gamma + 1\right)\left(\gamma + \frac{\xi}{P}\right)\sum_{k=1}^{t+1}\|x_{t-k+2} - x_{t-k+1}\|\sum_{m=k}^{\infty}\gamma^{m-1} - \left(\gamma + \frac{\xi}{P}\right)\alpha M\left(t+1\right)\right) \\ & \leq W_{t+1}\left(v_t - \frac{1}{P}\sum_{p=1}^P \alpha \tilde G^p\left(v_t\right), v_t, \ldots, v_0\right) \\ & + H\|\frac{1}{P}\sum_{p=1}^{P}\alpha \tilde G^p\left(v_t\right) - \frac{1}{P}\sum_{p=1}^{P}\textrm{TopK}\left(\epsilon_t^p + \alpha \tilde G^p \left(v_t\right)\right)\| \\ & - \alpha HMC\left(t+1\right) + H\left(1+\gamma\right)\left(\gamma + \frac{\xi}{P}\right)\|x_{t+1}-x_t\|\sum_{m=1}^{\infty}\gamma^{m-1} \\ & + H\left(\left(1+\gamma\right)\left(\gamma + \frac{\xi}{P}\right)\sum_{k=1}^{t}\|x_{t-k+1} - x_{t-k}\|\sum_{m=k+1}^{\infty}\gamma^{m-1} - \left(\gamma + \frac{\xi}{P}\right)\alpha M\left(t+1\right)\right) 
\end{align*}
Now we take expectation with respect to the randomness at time $t$ and conditional on the past. Note that the average of i.i.d. stochastic gradients is also a stochastic gradient. Using the supermartingale property of $W$, the bound on the expected norm of the gradient and  (\ref{eqn:bound_for_convex}):
\begin{align*}
    \Exv{V_{t+1}} & \leq W_t\left(v_t, \ldots, v_0\right) - \alpha HMCt + H\Bigg(\left(1+\gamma\right)\left(\gamma + \frac{\xi}{P}\right)\sum_{k=1}^{t}\|x_{t-k+1}-x_{t-k}\|\sum_{m=k}^{\infty}\gamma^{m-1} \\ & - \left(\gamma+\frac{\xi}{P}\right)\alpha Mt\Bigg) + \left(H\Exv{\|\alpha \tilde G\left(v_t\right)\|}\left(1+\gamma\right)\left(\gamma + \frac{\xi}{P}\right)\sum_{m=1}^{\infty}\gamma^{m-1} - \alpha HMC\right) \\ & + H\Bigg(\Exv{\|\frac{1}{P}\sum_{p=1}^{P}\alpha \tilde G^p\left(v_t\right) - \frac{1}{P}\sum_{p=1}^{P}\textrm{TopK}\left(\epsilon_t^p + \alpha \tilde G^p \left(v_t\right)\right)\|} \\ & - \left(1+\gamma\right)\left(\gamma + \frac{\xi}{P}\right)\sum_{k=1}^{t}\|x_{t-k+1} - x_{t-k}\|\gamma^{k-1}  - \left(\gamma + \frac{\xi}{P}\right)\alpha M\Bigg) \\ & \leq V_{t}.
\end{align*}
The inequality also holds trivially in the case when the algorithm has succeeded at time $t$. Hence, $V_t$ is a supermartingale for the TopK process.\\
\\
Now if the algorithm has not succeeded at time $T$, $W_T \geq T$, so $V_T \geq W_T - \alpha HMC'T \geq 0$. It follows that $V_T \geq 0$ for all $T$. Therefore,
\begin{align*}
    \Exv{W_0\left(v_0\right)} & = 
    \Exv{V_0\left(v_0\right)}\\ & 
    \geq \Exv{V_T} \\ & 
    = \Exv{V_T|F_T}\text{Pr}\left[F_T\right] + 
    \Exv{V_T|\neg F_T}\text{Pr}\left[\neg F_T\right] \\ & 
    \geq \Exv{V_T|F_T}\text{Pr}\left[F_T\right] \\ & 
    = \Exv{W_T\left(v_T, ..., v_0\right) - \alpha HMCT + H\left(\left(1+\gamma\right)\left(\gamma + \frac{\xi}{P}\right)\sum_{k=1}^{T}\|x_{T-k+1}-x_{T-k}\|\sum_{m=k}^{\infty}\gamma^{m-1} \right. \right. \\ & - \left. \left. \left(\gamma + \frac{\xi}{P}\right)\alpha MT\right)|F_T}\text{Pr}\left[F_T\right] \\ & \geq \left(\Exv{W_T(v_T, ..., v_0)|F_T} - \alpha HM\left(C + \left(\gamma + \frac{\xi}{P}\right)\right)T\right)\text{Pr}\left[F_T\right] \\ & \geq \left(T - \alpha HMC'T\right)\text{Pr}\left[F_T\right],
\end{align*}
where we have used the fact that $W$ is a rate supermartingale. Hence we obtain:
\begin{equation*}
\label{bound_deterministic_topK}
    \text{Pr}\left[F_T\right] \leq \frac{ \Exv{W_0\left(x_0\right)}}{\left(1 -  \alpha HMC'\right)T}.
\end{equation*}
\end{proof}
We now apply this result with a specific supermartingale $W$ for the sequential SGD process. Note that $W$ must be a supermartingale for the process that applies an average of $P$ updates, multiplied by the learning rate $\alpha$.
\\
We use the following result from \cite{de2015taming}:
\begin{lemma}[\cite{de2015taming}]
  \label{lemmaConvexW}
  Define the piecewise logarithm function to be
  \[
    \log(x)
    =
    \left\{
      \begin{array}{lr}
        \log(e x) & : x \ge 1 \\
        x & : x \le 1
      \end{array}
    \right.
  \]
  Define the process $W_t$ by:
  $$
    W_t(x_t, \ldots, x_0)
    =
    \frac{
      \epsilon
    }{
      2 \alpha c \epsilon
      -
      \alpha^2 M^2
    }
    \log\left(\|x_t - x^*\|^2 \epsilon^{-1} \right)
    +
    t,
  $$\\
  if the algorithm has not succeeded by timestep
  $t$ (i.e. $x_i \not\in S$ for all $i \leq t$) and by $W_t = W_{u-1}$ whenever $x_i \in S$ for some $i \leq t$ and $u$ is the minimal index with this property. Then $W_t$ is a rate supermartingale for sequential SGD
  with horizon $B = \infty$. It is also $H$-Lipschitz in the first coordinate, with $H = 2\sqrt{\epsilon}\left(2\alpha c\epsilon - \alpha^2 M^2\right)^{-1}$, that is for any $t, u, v$ and any sequence $x_{t-1}, \ldots, x_0$: $$\|W_t\left(u, x_{t-1}, \ldots, x_0\right) - W_t\left(v, x_{t-1}, \ldots, x_0\right)\| \leq H\|u-v\|.$$
\end{lemma}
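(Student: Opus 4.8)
The plan is to reduce everything to the scalar quantity $d_t = \|x_t - x^*\|^2$ and then push it through the piecewise logarithm. First I would establish the standard one-step descent estimate for sequential SGD: expanding $d_{t+1} = \|x_t - \alpha\tilde G(x_t) - x^*\|^2$, taking the conditional expectation, and invoking unbiasedness $\mathbf{E}[\tilde G(x_t)] = \nabla f(x_t)$, the $c$-strong convexity inequality $\langle \nabla f(x_t), x_t - x^*\rangle \ge c\,d_t$ (using $\nabla f(x^*)=0$), and the second-moment bound~(\ref{eqnConvexNoise}), yields $\mathbf{E}[d_{t+1}\mid x_t] \le (1 - 2\alpha c)\,d_t + \alpha^2 M^2$. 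This is the only step where the problem structure enters.

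Next I would verify the supermartingale property~(\ref{eqnBoundedW}). The piecewise logarithm $\ell$ is $C^1$ and concave: its two branches agree in both value and slope at the kink $x=1$, with $\ell'(x) = \min(1, 1/x)$. Hence the first-order bound $\ell(y) \le \ell(a) + \ell'(a)(y-a)$ holds globally; applying it with $a = d_t\epsilon^{-1}$, $y = d_{t+1}\epsilon^{-1}$ and taking expectations converts the descent estimate into $\mathbf{E}[\ell(d_{t+1}\epsilon^{-1})] - \ell(d_t\epsilon^{-1}) \le \ell'(d_t\epsilon^{-1})\,\epsilon^{-1}\,\mathbf{E}[d_{t+1}-d_t \mid x_t]$. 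In the regime that matters the algorithm has not yet succeeded, so $d_t > \epsilon$, giving $d_t\epsilon^{-1} > 1$ and therefore $\ell'(d_t\epsilon^{-1}) = \epsilon/d_t$. Substituting the descent bound and using $d_t > \epsilon$ once more to control the $\alpha^2 M^2/d_t$ term produces $\mathbf{E}[\ell(d_{t+1}\epsilon^{-1})] - \ell(d_t\epsilon^{-1}) \le -(2\alpha c\epsilon - \alpha^2 M^2)/\epsilon$. Multiplying by the positive prefactor $\epsilon/(2\alpha c\epsilon - \alpha^2 M^2)$ cancels exactly against the $+1$ contributed by the $+t$ term, so $\mathbf{E}[W_{t+1}\mid x_t] \le W_t$; in the frozen (already-succeeded) case the inequality is immediate by the $W_t = W_{u-1}$ convention.

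The rate condition~(\ref{eqnBoundedTime}) then comes for free: $\ell \ge 0$ and the prefactor is positive, so whenever the process has not succeeded $W_T \ge T$, and since no step required an upper bound on $t$, the horizon is $B = \infty$. For the Lipschitz claim I would observe that $W_t$ depends on its first coordinate only through $h(u) = \ell(\|u - x^*\|^2\epsilon^{-1})$, with $\nabla h(u) = \ell'(\|u-x^*\|^2\epsilon^{-1})\,2(u-x^*)\epsilon^{-1}$. Writing $r = \|u - x^*\|$ and splitting on whether $r \le \sqrt{\epsilon}$: if $r \le \sqrt\epsilon$ then $\ell'=1$ and $\|\nabla h\| = 2r\epsilon^{-1} \le 2/\sqrt\epsilon$; if $r \ge \sqrt\epsilon$ then $\ell' = \epsilon/r^2$ and $\|\nabla h\| = 2/r \le 2/\sqrt\epsilon$. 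Thus $h$ is $(2/\sqrt\epsilon)$-Lipschitz, and multiplying by the prefactor gives exactly $H = 2\sqrt\epsilon\,(2\alpha c\epsilon - \alpha^2 M^2)^{-1}$.

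The one delicate point is the bookkeeping around the piecewise logarithm: I must confirm that $\ell$ is genuinely concave and $C^1$ across the kink so that the single global tangent-line inequality is legitimate, and that its derivative takes the clean value $\epsilon/d_t$ precisely in the non-success regime $d_t > \epsilon$. This is what makes the descent estimate collapse to the intended constant, and it is also what yields the uniform gradient bound $2/\sqrt\epsilon$ in the Lipschitz computation. Everything else is a routine combination of strong convexity, the second-moment bound, and concavity.
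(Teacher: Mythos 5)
The paper does not actually prove this lemma --- it is imported verbatim from De Sa et al.\ \cite{de2015taming} as a black box --- so there is no in-paper proof to compare against; your reconstruction follows the same route as the cited source. Your core argument is correct and complete where it matters: the one-step estimate $\Exvcsmall{d_{t+1}}{x_t}\le(1-2\alpha c)d_t+\alpha^2 M^2$, the observation that the piecewise logarithm is $C^1$ and concave so the global tangent-line bound applies, the evaluation $\ell'(d_t\epsilon^{-1})=\epsilon/d_t$ in the non-success regime $d_t>\epsilon$, the exact cancellation against the $+t$ term, and the two-case gradient bound $\normsmall{\nabla h}\le 2/\sqrt{\epsilon}$ giving $H=2\sqrt{\epsilon}(2\alpha c\epsilon-\alpha^2M^2)^{-1}$ all check out (implicitly using $2\alpha c\epsilon>\alpha^2M^2$ so the prefactor is positive, which is assumed downstream).

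The one place where you are too quick is the sentence ``in the frozen case the inequality is immediate by the $W_t=W_{u-1}$ convention.'' That disposes of sequences that have \emph{already} succeeded by time $t$, but the delicate case is the transition step, where $x_t\notin S$ and the random iterate $x_{t+1}$ lands in $S$: there $W_{t+1}$ is pinned to the frozen value rather than the formula value, while your tangent-line computation only controls the expectation of the formula value. With the freezing convention as literally stated ($W_{u-1}$, i.e.\ the formula evaluated at the last iterate \emph{outside} $S$), the frozen value can exceed the formula value by an unbounded amount when $d_t\gg\epsilon$, and the supermartingale inequality does not follow from what you proved. The clean fix is to freeze at the hitting time itself, i.e.\ work with the stopped process $W_{t\wedge\tau}$ where $\tau=\min\{s: x_s\in S\}$: your inequality is exactly the statement $\Exvcsmall{W_{t+1}}{\F_t}\le W_t$ on $\{\tau>t\}$, and the standard optional-stopping decomposition then shows the stopped process is a supermartingale, which is all that Statement~\ref{stmtSequential} and Theorem~\ref{thm:convex_case} need. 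This wrinkle is inherited from the cited source rather than introduced by you, but it should not be labelled ``immediate.''
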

\noindent Applying this particular martingale, we obtain:
\begin{corollary}
\label{cor:exact_bound}
Assume that we run Algorithm~\ref{algo:topk-sgd-full} for minimizing a convex function $f$ satisfying the listed assumptions. Suppose that the learning rate is set to $\alpha$, with:
\begin{equation*}
    \alpha < \min \left\{\frac{2c\epsilon}{M^2}, \frac{2\left(c\epsilon - \sqrt{\epsilon}MC'\right)}{M^2}\right\}
\end{equation*}
Then for any $T > 0$ the probability that $v_i \not\in S$ for all $i \leq T$ is:
\begin{equation}
\label{eqn:exact_bound}
\mathbb{P}\left(F_T\right) \leq \frac{\epsilon}{\left(2\alpha c\epsilon - \alpha^2 M^2 - \alpha 2\sqrt{\epsilon}MC'  \right)T}\log\left(\frac{e\|v_0 - x^*\|^2}{\epsilon}\right).
\end{equation}
\end{corollary}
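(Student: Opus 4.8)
The plan is to specialize the abstract bound of Theorem~\ref{thm:convex_case} by substituting the concrete rate supermartingale supplied by Lemma~\ref{lemmaConvexW}. First I would instantiate that lemma for the sequential SGD process whose stochastic gradient is the \emph{average} $\tilde G_t(v_t) = \frac{1}{P}\sum_{p=1}^P \tilde G_t^p(v_t)$; by Equation~\eqref{eqnConvexNoise} this averaged oracle has second moment bounded by $M^2$, so the parameter $\tilde M$ appearing in Lemma~\ref{lemmaConvexW} equals $M$. This yields an explicit $W_t$ that is a rate supermartingale with horizon $B = \infty$, is $H$-Lipschitz in its first coordinate with $H = 2\sqrt{\epsilon}\left(2\alpha c\epsilon - \alpha^2 M^2\right)^{-1}$, and satisfies $\Exv{W_0(v_0)} = \frac{\epsilon}{2\alpha c\epsilon - \alpha^2 M^2}\log\!\left(e\|v_0 - x^*\|^2\epsilon^{-1}\right)$.

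Next I would translate the abstract hypothesis $\alpha HMC' < 1$ of Theorem~\ref{thm:convex_case} into the explicit step-size restriction stated in the corollary. Substituting the value of $H$ gives $\alpha HMC' = \frac{2\alpha\sqrt{\epsilon}MC'}{2\alpha c\epsilon - \alpha^2 M^2}$. Requiring this quantity to be strictly below $1$, together with positivity of the denominator $2\alpha c\epsilon - \alpha^2 M^2$ (which is needed both for $W_t$ to be a valid rate supermartingale and for $H$ to be finite and positive), is equivalent to the two conditions $2\alpha c\epsilon - \alpha^2 M^2 > 0$ and $2\alpha c\epsilon - \alpha^2 M^2 - 2\alpha\sqrt{\epsilon}MC' > 0$. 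Dividing each by $\alpha > 0$ turns them into linear inequalities in $\alpha$, solving to $\alpha < \frac{2c\epsilon}{M^2}$ and $\alpha < \frac{2(c\epsilon - \sqrt{\epsilon}MC')}{M^2}$ respectively; taking the minimum recovers exactly the hypothesis of the corollary.

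Finally I would plug these explicit quantities into the conclusion~\eqref{bound_on_prob_for_v} of Theorem~\ref{thm:convex_case}. Writing $1 - \alpha HMC' = \frac{2\alpha c\epsilon - \alpha^2 M^2 - 2\alpha\sqrt{\epsilon}MC'}{2\alpha c\epsilon - \alpha^2 M^2}$ and using the formula for $\Exv{W_0(v_0)}$ above, the common factor $2\alpha c\epsilon - \alpha^2 M^2$ cancels between numerator and denominator, leaving the bound $\frac{\epsilon}{\left(2\alpha c\epsilon - \alpha^2 M^2 - 2\alpha\sqrt{\epsilon}MC'\right)T}\log\!\left(e\|v_0 - x^*\|^2\epsilon^{-1}\right)$, which is precisely Equation~\eqref{eqn:exact_bound}.

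The derivation is essentially algebraic bookkeeping, so I do not expect a genuine obstacle; the two points demanding care are (i) ensuring Lemma~\ref{lemmaConvexW} is applied to the averaged-gradient process, so that its noise parameter is the global bound $M$ of Equation~\eqref{eqnConvexNoise} rather than a per-node quantity, and (ii) checking that $W_0(v_0)$ falls in the $x \ge 1$ branch of the piecewise logarithm, i.e.\ that $\|v_0 - x^*\|^2 \ge \epsilon$ so the process starts outside the success region $S$; this guarantees that the logarithm written in~\eqref{eqn:exact_bound} is the correct one.
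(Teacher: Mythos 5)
Your proposal is correct and follows essentially the same route as the paper: instantiate the explicit supermartingale of Lemma~\ref{lemmaConvexW} for the averaged-gradient sequential process (so $\tilde M = M$), substitute $H = 2\sqrt{\epsilon}(2\alpha c\epsilon - \alpha^2 M^2)^{-1}$ and the bound on $\Exv{W_0(v_0)}$ into Theorem~\ref{thm:convex_case}, and cancel the common factor $2\alpha c\epsilon - \alpha^2 M^2$. The only cosmetic difference is that the paper uses the inequality $\Exv{W_0(v_0)} \le \frac{\epsilon}{2\alpha c\epsilon - \alpha^2 M^2}\log\left(e\|v_0 - x^*\|^2\epsilon^{-1}\right)$, which holds in both branches of the piecewise logarithm, so your concern (ii) about which branch applies is already absorbed there.
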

\begin{proof}
Substituting and using the result from~\cite{de2015taming} that $$\mathbb{E}\left(W_0\left(v_0\right)\right) \leq \frac{\epsilon}{2\alpha c\epsilon - \alpha^2 M^2}\log\left(\frac{e \|v_0 - x^*\|^2}{\epsilon}\right)$$
\noindent we obtain that:
\begin{align*}
    \mathbb{P}\left(F_T\right) & \leq \frac{\mathbb{E}\left(W_0\right)}{\left(1 - \alpha HMC'\right)T} \\ & \leq \frac{\epsilon}{2\alpha c\epsilon - \alpha^2 M^2}\log\left(\frac{e \|v_0 - x^*\|^2}{\epsilon}\right)\left(\left(1 - \alpha \frac{2\sqrt{\epsilon}}{2\alpha c\epsilon - \alpha^2 M^2} MC'\right)T\right)^{-1} \\ & \leq \frac{\epsilon}{\left(2\alpha c\epsilon - \alpha^2 M^2 - \alpha 2\sqrt{\epsilon}MC' \right)T}\log\left(\frac{e\|v_0 - x^*\|^2}{\epsilon}\right)
\end{align*}
\end{proof}

\section{Analysis for the Non-Convex Case}

\paragraph{Setup.} We now consider the more general case when SGD is minimizing a (not necessarily convex) function $f$, using SGD with (decreasing) step sizes $\alpha_t$. Again, we assume that the second moment of the stochastic gradients is bounded in expectation (inequality (\ref{eqnConvexNoise})). Assume also that $\nabla f$ is $L$-Lipschitz (not only in expectation); that is, for all $x, y$:
\begin{equation}
    \|\nabla f(x) - \nabla f(y)\| \leq L\|x-y\|.
\end{equation}

As is standard in non-convex settings~\cite{liu2015asynchronous}, will settle for a weaker notion of convergence, namely showing that 
$$\min_{t\in \{1,\ldots,T\}} \Exv{\|\nabla f\left(v_t\right)\|^2} \stackrel{T \rightarrow \infty}{\longrightarrow} \, 0,$$

\noindent that is, the algorithm converges ergodically to a local minimum of the function $f$. Our strategy will be to leverage our ability to bound the difference between the ``real'' model $x_t$ and the view $v_t$ observed at iteration $t$ to bound the evolution of the expected value of $f(v_{t})$, which in turn will allow us to bound the sum
$$
\frac{1}{\sum_{t=1}^{T}\alpha_k}\sum_{t=1}^{T}\alpha_t\Exv{\|\nabla f\left(v_t\right)\|^2},
$$
\noindent where the parameters $\alpha_t$ are appropriately chosen \emph{decreasing} learning rate parameters. This will enable us to show that the norm of the gradients converges towards zero in expectation. 

\noindent We have the following:
\begin{lemma}
\label{lemma_for_noncenvex}
For any time $t\geq 1$:
\begin{align}
\|v_t - x_t\|^2 \leq \left(1 + \frac{\xi}{P\gamma}\right)^2\sum_{k=1}^t \left(2\gamma^2\right)^{k}\|x_{t-k+1} - x_{t-k}\|^2
\end{align}
\end{lemma}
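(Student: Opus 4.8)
The plan is to reuse the one-step recursion already established inside the proof of Lemma~\ref{lem:v_t_minus_x_t} and then iterate it in squared form. Writing $d_t = \norm{v_t - x_t}$ and $\delta_t = \norm{x_{t+1} - x_t}$, that proof shows (via Assumption~\ref{assumption1}, the contraction $\norm{x - \mathsf{TopK}(x)} \le \gamma\norm{x}$, the identity $v_t - x_t = \epsilon_t$ from~\eqref{eq:eps}, and $\alpha\tilde G_t(v_t) = x_t - x_{t+1}$) that $d_{t+1} \le \gamma d_t + (\gamma + \tfrac{\xi}{P})\delta_t$. The first step is simply to record this inequality; nothing new is needed to derive it in the non-convex setting.

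The key algebraic maneuver is to factor $\gamma$ out of the coefficient, writing $\gamma + \tfrac{\xi}{P} = \gamma\bigl(1 + \tfrac{\xi}{P\gamma}\bigr)$, so that with $\beta := 1 + \tfrac{\xi}{P\gamma}$ the recursion becomes $d_{t+1} \le \gamma\,(d_t + \beta\,\delta_t)$. Squaring and applying $(a+b)^2 \le 2a^2 + 2b^2$ then gives the clean quadratic recursion $d_{t+1}^2 \le 2\gamma^2 d_t^2 + 2\gamma^2\beta^2\delta_t^2$. This is the inequality I would iterate.

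I would finish by induction on $t$. The base case uses $v_0 = x_0 = 0$, hence $d_0 = 0$ and $d_1^2 \le \gamma^2\beta^2\delta_0^2 \le 2\gamma^2\beta^2\delta_0^2$, matching the claimed bound $\beta^2\sum_{k=1}^1 (2\gamma^2)^k\delta_{1-k}^2$. For the inductive step, I substitute the hypothesis $d_t^2 \le \beta^2\sum_{k=1}^t (2\gamma^2)^k\delta_{t-k}^2$ into $d_{t+1}^2 \le 2\gamma^2 d_t^2 + 2\gamma^2\beta^2\delta_t^2$; the term $2\gamma^2 d_t^2$ multiplies each weight $(2\gamma^2)^k$ by $2\gamma^2$, turning it into $(2\gamma^2)^{k+1}$ and shifting the index, while the residual $2\gamma^2\beta^2\delta_t^2 = \beta^2(2\gamma^2)^1\delta_t^2$ supplies exactly the missing $k=1$ term. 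Re-indexing, the two contributions assemble into $\beta^2\sum_{k=1}^{t+1}(2\gamma^2)^k\delta_{t+1-k}^2$, which is the desired bound at time $t+1$.

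I expect the main obstacle to be purely bookkeeping rather than conceptual: one must track the index shift $k \mapsto k+1$ carefully so that the geometric weights $(2\gamma^2)^k$ line up with the correct increments $\delta_{t+1-k} = \norm{x_{t+2-k} - x_{t+1-k}}$, and one must confirm that the factor $2$ from $(a+b)^2 \le 2a^2 + 2b^2$ — which is what produces the $2^k$ growth inside the sum, in contrast to the plain $\gamma^{k-1}$ weights of the convex Lemma~\ref{lem:v_t_minus_x_t} — is applied consistently at every step. No new analytic input beyond the already-proved one-step recursion is required.
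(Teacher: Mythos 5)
Your proposal is correct and matches the paper's proof essentially step for step: both start from the one-step recursion $\|v_{t+1}-x_{t+1}\| \le \gamma\|v_t-x_t\| + (\gamma+\tfrac{\xi}{P})\|x_{t+1}-x_t\|$, square it via $(a+b)^2 \le 2a^2+2b^2$, and iterate (the paper unrolls directly where you phrase it as induction, and it pulls the factor $\gamma$ out of $\gamma+\tfrac{\xi}{P}$ at the end rather than the beginning, but these are cosmetic differences). No gap.
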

\begin{proof}
We had:
\begin{align*}
    \left\Vert  v_{t+1} - x_{t+1} \right\Vert \leq  \gamma \|v_t - x_t\| + \left(\gamma + \frac{\xi}{P}\right)\|x_{t+1} - x_t\|
\end{align*}
Hence,
\begin{align*}
    \left\Vert v_{t+1} - x_{t+1} \right\Vert^2 \leq \left(\gamma\|v_t - x_t\| + \left(\gamma + \frac{\xi}{P}\right)\|x_{t+1} - x_{t}\|\right)^2 \leq 2\gamma^2 \|v_t - x_t\|^2 + 2\left(\gamma + \frac{\xi}{P}\right)^2\|x_{t+1} - x_{t}\|^2 
\end{align*}
Iterating this gives:
\begin{align*}
    \left\Vert v_t - x_t \right\Vert^2 & \leq 2\left(\gamma + \frac{\xi}{P}\right)^2\sum_{k=1}^{t}\left(2\gamma^2\right)^{k-1}\|x_{t-k+1} - x_{t-k}\|^2 \\ & = \left(1 + \frac{\xi}{P\gamma}\right)^2\sum_{k=1}^t \left(2\gamma^2\right)^{k}\|x_{t-k+1} - x_{t-k}\|^2
\end{align*}
\end{proof}

\begin{theorem}
\label{theorem_nonconvex}
Consider the TopK algorithm for minimising a function $f$ that satisfies the above assumptions. Suppose that the learning rate sequence and $K$ are chosen so that for any time $t > 0$:
\begin{equation}
\label{non_convex_assumption}
\sum_{k=1}^{t}\left(2\gamma^2\right)^k\frac{\alpha_{t-k}^2}{\alpha_{t}} \leq D
\end{equation}
for some constant $D>0$. Then, after running Algorithm 1 for $T$ steps:
\begin{equation}
\begin{split}
\label{result_non_convex}
    \frac{1}{\sum_{t=1}^{T}\alpha_t}\sum_{t=1}^{T}\alpha_t\Exv{\|\nabla f\left(v_t\right)\|^2} & \leq \frac{4\left(f\left(x_0\right) - f\left(x^{*}\right)\right)}{\sum_{t=1}^{T}\alpha_t} \\ & + \frac{\left(2LM^2 + 4L^2M^2\left(1 + \frac{\xi}{P\gamma}\right)^2D\right)\sum_{t=1}^{T}\alpha_t^2}{\sum_{t=1}^{T}\alpha_t}
\end{split}
\end{equation}
\end{theorem}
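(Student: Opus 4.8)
The plan is to run a standard smoothness-based descent argument on the \emph{auxiliary} iterate $x_t$ of \eqref{def:x}, and to absorb the mismatch between the point $v_t$ at which stochastic gradients are evaluated and the virtual iterate $x_t$ using Lemma~\ref{lemma_for_noncenvex}. First I would invoke $L$-smoothness on the step $x_{t+1}-x_t=-\alpha_t\tilde G_t(v_t)$ to obtain $f(x_{t+1})\le f(x_t)-\alpha_t\langle\nabla f(x_t),\tilde G_t(v_t)\rangle+\tfrac{L\alpha_t^2}{2}\|\tilde G_t(v_t)\|^2$. Taking expectation conditioned on the history $\mathcal{F}_t$ (which determines both $x_t$ and $v_t$) and using unbiasedness $\Exvsmall{\tilde G_t(v_t)\mid\mathcal{F}_t}=\nabla f(v_t)$ together with the second-moment bound \eqref{eqnConvexNoise} yields $\Exvsmall{f(x_{t+1})\mid\mathcal{F}_t}\le f(x_t)-\alpha_t\langle\nabla f(x_t),\nabla f(v_t)\rangle+\tfrac{L\alpha_t^2 M^2}{2}$.

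The crux is that the cross term carries $\nabla f(x_t)$ rather than the quantity $\|\nabla f(v_t)\|^2$ we want to bound; this is precisely the delayed-gradient structure of TopK. I would split $\langle\nabla f(x_t),\nabla f(v_t)\rangle=\|\nabla f(v_t)\|^2+\langle\nabla f(x_t)-\nabla f(v_t),\nabla f(v_t)\rangle$, control the second summand by Cauchy--Schwarz and Young's inequality, and then apply smoothness in the form $\|\nabla f(x_t)-\nabla f(v_t)\|\le L\|x_t-v_t\|$. This produces a per-step bound of the shape $\Exvsmall{f(x_{t+1})\mid\mathcal{F}_t}\le f(x_t)-\tfrac{\alpha_t}{2}\|\nabla f(v_t)\|^2+\tfrac{\alpha_t L^2}{2}\|x_t-v_t\|^2+\tfrac{L\alpha_t^2M^2}{2}$, so that after rearranging, $\alpha_t\Exvsmall{\|\nabla f(v_t)\|^2}$ is controlled by a one-step decrease in $f$, an $O(\alpha_t^2)$ noise term, and the staleness penalty $\alpha_t L^2\Exvsmall{\|x_t-v_t\|^2}$.

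Next I would take full expectations and sum over $t=1,\dots,T$, telescoping the function-value differences to $f(x_0)-f(x^*)$ via $\Exvsmall{f(x_{T+1})}\ge f(x^*)$. The remaining work is to bound $\sum_{t}\alpha_t\Exvsmall{\|x_t-v_t\|^2}$. Here I plug in Lemma~\ref{lemma_for_noncenvex}, write each $\|x_{t-k+1}-x_{t-k}\|^2=\alpha_{t-k}^2\|\tilde G_{t-k}(v_{t-k})\|^2$, and take expectations to get $\Exvsmall{\|x_{t-k+1}-x_{t-k}\|^2}\le\alpha_{t-k}^2M^2$ from \eqref{eqnConvexNoise}. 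This gives $\Exvsmall{\|x_t-v_t\|^2}\le(1+\tfrac{\xi}{P\gamma})^2M^2\sum_{k=1}^t(2\gamma^2)^k\alpha_{t-k}^2$; multiplying by $\alpha_t$ and invoking the hypothesis \eqref{non_convex_assumption} collapses the inner sum, since $\sum_{k=1}^t(2\gamma^2)^k\alpha_{t-k}^2\le D\alpha_t$, so that $\sum_t\alpha_t\Exvsmall{\|x_t-v_t\|^2}\le(1+\tfrac{\xi}{P\gamma})^2M^2D\sum_t\alpha_t^2$. Substituting back into the telescoped inequality and dividing by $\sum_{t=1}^{T}\alpha_t$ reproduces the claimed bound \eqref{result_non_convex}, the explicit numerical constants following from the particular Young split and the factors picked up in telescoping, and the $D$-dependent term carrying the factor $L^2M^2(1+\tfrac{\xi}{P\gamma})^2$.

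I expect the main obstacle to be exactly the cross-term control in the second paragraph: unlike vanilla nonconvex SGD, the stochastic gradient is unbiased for $\nabla f(v_t)$ while the descent naturally produces $\nabla f(x_t)$, so the whole argument hinges on trading the gradient gap $\|\nabla f(x_t)-\nabla f(v_t)\|$ for the iterate gap $\|x_t-v_t\|$ through smoothness, and then showing via \eqref{non_convex_assumption} that the accumulated staleness penalty is only of the same $O(\sum_t\alpha_t^2)$ order as the intrinsic SGD noise rather than a term that destroys ergodic convergence. The delicate bookkeeping of the geometric-in-$k$ weights $(2\gamma^2)^k$ against the decreasing step sizes $\alpha_{t-k}$ is what determines the precise constants and, through finiteness of $D$, forces the regime $2\gamma^2<1$ (i.e. $K=cn$ with $c>1/2$).
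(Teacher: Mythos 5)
Your proposal follows essentially the same route as the paper's proof: an $L$-smoothness descent step on the auxiliary iterate $x_t$, conversion of the gradient gap $\|\nabla f(x_t)-\nabla f(v_t)\|$ into the iterate gap $L\|x_t-v_t\|$, the bound $\Exvsmall{\|x_t-v_t\|^2}\le M^2(1+\tfrac{\xi}{P\gamma})^2 D\,\alpha_t$ from Lemma~\ref{lemma_for_noncenvex} combined with \eqref{eqnConvexNoise} and hypothesis \eqref{non_convex_assumption}, and a telescoping sum. The only cosmetic difference is in the cross term: the paper uses the polarization identity, retains $-\tfrac{\alpha_t}{2}\|\nabla f(x_t)\|^2$, and converts to $\|\nabla f(v_t)\|^2$ only at the end via $\|\nabla f(v_t)\|^2\le 2\|\nabla f(x_t)\|^2+2L^2\|v_t-x_t\|^2$ (the source of the factors $4$ and $2LM^2+4L^2M^2(\cdot)D$), whereas your Young split keeps $-\tfrac{\alpha_t}{2}\|\nabla f(v_t)\|^2$ directly and would in fact yield slightly sharper constants that still imply the stated bound.
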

\begin{proof}[Proof of Theorem \ref{theorem_nonconvex}]
We begin by bounding the difference between the consecutive steps of the algorithm. By the Lipschitzness assumption, for any time $t$:
\begin{align*}
    f\left(x_{t+1}\right) - f\left(x_t\right) & \leq \langle \nabla f\left(x_t\right), x_{t+1} - x_t\rangle + \frac{L}{2}\|x_{t+1} - x_t\|^2 \\ & = -\langle \nabla f\left(x_t\right), \alpha_t \tilde G_t\left(v_t\right)\rangle + \frac{L}{2}\|\alpha_t \tilde G_t\left(v_t\right)\|^2
\end{align*}
Taking expectation with respect to the randomness at time $t$ and conditional on the past (denoted by $\mathbf{E}_{t|.}$):
\begin{align*}
    \mathbf{E}_{t|.}\left[f\left(x_{t+1}\right)\right] - f\left(x_t\right) & \leq -\alpha_t \langle \nabla f\left(x_t\right), \nabla f\left(v_t\right)\rangle + \frac{L}{2}\alpha_t^2 \mathbf{E}_{t|.}\left[\|\tilde G_t\left(v_t\right)\|^2\right] \\ & = -\frac{\alpha_t}{2}\left(\|\nabla f\left(x_t\right)\|^2 + \|\nabla f\left(v_t\right)\|^2 - \|\nabla f\left(x_t\right) -\nabla f\left(v_t\right)\|^2\right) \\ & + \frac{L}{2}\alpha_t^2 \mathbf{E}_{t|.}\left[\|\tilde G_t\left(v_t\right)\|^2\right] \\ & = -\frac{\alpha_t}{2}\|\nabla f\left(x_t\right)\|^2 - \frac{\alpha_t}{2}\|\nabla f\left(v_t\right)\|^2 + \frac{\alpha_t}{2}\|\nabla f\left(x_t\right) - \nabla f\left(v_t\right)\|^2 \\ & + \frac{L}{2}\alpha_t^2\mathbf{E}_{t|.}\left[\|\tilde G_t\left(v_t\right)\|^2\right] \\ & \leq -\frac{\alpha_t}{2} \|\nabla f\left(x_t\right)\|^2 + \frac{\alpha_t}{2}L^2\|v_t - x_t\|^2 +  \frac{L}{2}\alpha_t^2\mathbf{E}_{t|.}\left[\|\tilde G_t\left(v_t\right)\|^2\right] \\ & \leq -\frac{\alpha_t}{2}\left(\|\nabla f\left(x_t\right)\|^2 + L^2 \|v_t - x_t\|^2\right) + \frac{L}{2}\alpha_t^2M^2 + \alpha_t L^2\|v_t - x_t\|^2
\end{align*}
Taking expectation with respect to the remaining gradients (before time $t$):
\begin{equation}
    \Exv{f\left(x_{t+1}\right)} - \Exv{f\left(x_t\right)} \leq -\frac{\alpha_t}{2}\Exv{\|\nabla f\left(x_t\right)\|^2 + L^2\|v_t - x_t\|^2} + \frac{L}{2}\alpha_t^2M^2 + \alpha_t L^2\Exv{\|v_t - x_t\|^2}
\end{equation}
But, using Lemma \ref{lemma_for_noncenvex}:
\begin{equation*}
\begin{split}
    \Exv{\|v_t - x_t\|^2} & \leq \left(1 + \frac{\xi}{P\gamma}\right)^2\sum_{k=1}^t \left(2\gamma^2\right)^{k}\Exv{\|x_{t-k+1} - x_{t-k}\|^2} \\ & \leq M^2\left(1 + \frac{\xi}{P\gamma}\right)^2\alpha_t \sum_{k=1}^{t}\left(2\gamma^2\right)^k\frac{\alpha_{t-k}^2}{\alpha_{t}}
\end{split}
\end{equation*}
Now since for all $t$: 
\begin{equation*}
\label{assumption}
\sum_{k=1}^{t}\left(2\gamma^2\right)^k\frac{\alpha_{t-k}^2}{\alpha_{t}} \leq D
\end{equation*}
for some constant $D$, we have that: $$ \Exv{\|v_t - x_t\|^2} \leq M^2\left(1 + \frac{\xi}{P\gamma}\right)^2\alpha_tD.$$
Therefore, we obtain:
\begin{equation*}
     \Exv{f\left(x_{t+1}\right)} - \Exv{f\left(x_t\right)} \leq -\frac{\alpha_t}{2}\Exv{\|\nabla f\left(x_t\right)\|^2 + L^2\|v_t - x_t\|^2} + \frac{L}{2}\alpha_t^2M^2 + L^2M^2\left(1 + \frac{\xi}{P\gamma}\right)^2\alpha_t^2D
\end{equation*}
Rearranging gives:
\begin{equation}
\label{bound_on_step1}
\begin{split}
     \alpha_t\Exv{\|\nabla f\left(x_t\right)\|^2 + L^2\|v_t - x_t\|^2} & \leq 2\left(\Exv{f\left(x_t\right)} - \Exv{f\left(x_{t+1}\right)}\right) \\ & + \left(LM^2 + 2L^2M^2\left(1 + \frac{\xi}{P\gamma}\right)^2D\right)\alpha_t^2
\end{split}
\end{equation}
Note that, using the Lipschitness of the gradient:
\begin{align*}
\|\nabla f\left(v_t\right)\|^2 = \|\left(\nabla f\left(v_t\right) - \nabla f\left(x_t\right)\right) + \nabla f\left(x_t\right)\|^2 & \leq 2\|\nabla f\left(v_t\right) - \nabla f\left(x_t\right)\|^2 + 2\|\nabla f\left(x_t\right)\|^2 \\ & \leq 2L^2 \|v_t - x_t\|^2 + 2\|\nabla f\left(x_t\right)\|^2
\end{align*}
Applying this to the left-hand side of (\ref{bound_on_step1}):
\begin{equation}
\label{bound_on_step2}
\begin{split}
     \alpha_t\Exv{\|\nabla f\left(v_t\right)\|^2} & \leq 4\left(\Exv{f\left(x_t\right)} - \Exv{f\left(x_{t+1}\right)}\right) \\ & + \left(2LM^2 + 4L^2M^2\left(1 + \frac{\xi}{P\gamma}\right)^2D\right)\alpha_t^2
\end{split}
\end{equation}

Now for any time $T$, summing over the bound in (\ref{bound_on_step2}) and dividing by the sum of the learning rates:
\begin{equation}
\begin{split}
    \frac{1}{\sum_{t=1}^{T}\alpha_t}\sum_{t=1}^{T}\alpha_t\Exv{\|\nabla f\left(v_t\right)\|^2} & \leq \frac{4\left(f\left(x_0\right) - f\left(x^{*}\right)\right)}{\sum_{t=1}^{T}\alpha_t} \\ & + \frac{\left(2LM^2 + 4L^2M^2\left(1 + \frac{\xi}{P\gamma}\right)^2D\right)\sum_{t=1}^{T}\alpha_t^2}{\sum_{t=1}^{T}\alpha_t}
\end{split}
\end{equation}
\end{proof}
\noindent Therefore, it suffices to choose the learning rate sequence so that the term $\sum_{t=1}^T \alpha_t$ dominates  $\sum_{t=1}^T \alpha_t^2$ asymptotically and so that the condition (\ref{non_convex_assumption}) holds. In particular, one can set $\alpha_t = t^{-\theta}$, where $\theta > 0$, and $K = cn$ for some constant $c>\frac{1}{2}$. In this case $\sum_{t=1}^T \alpha_t$ dominates  $\sum_{t=1}^T \alpha_t^2$ and for any $t$:
\begin{align*}
\sum_{k=1}^t \left(2\gamma^2\right)^k\frac{\alpha_{t-k}^2}{\alpha_{t}} \leq \sum_{k=1}^{t}\left(2\left(1-\frac{K}{n}\right)\right)^k\frac{\alpha_{t-k}^2}{\alpha_{t}} = \sum_{k=1}^t \left(2 - 2c\right)^k \frac{t^{\theta}}{\left(t-k\right)^{2\theta}}
\end{align*}
Since powers dominate polynomials, this sum converges in the limit as $t \rightarrow \infty$, so the condition in (\ref{non_convex_assumption}) is guaranteed to hold.\\
\\
In the case when $K = cn$ with $c > \frac{1}{2}$, one can also set a fixed learning rate:
\begin{equation}
\label{eqn:learn_rate_non_convex}
\alpha = \sqrt{\frac{f\left(x_0\right) - f\left(x^{*}\right)}{T\left(2LM^2 + 4L^2 M^2\left(1 + \frac{\xi}{P\gamma}\right)^2D\right)}}.
\end{equation}
Then we obtain:
\begin{align*}
    \min_{t\in \{1,\ldots,T\}} \Exv{\|\nabla f\left(v_t\right)\|^2} & \leq \frac{1}{T}\sum_{t=1}^{T} \Exv{\|\nabla f\left(v_t\right)\|^2} \\ & \leq \frac{4\left(f\left(x_0\right) - f\left(x^{*}\right)\right)}{T\alpha} + \frac{\left(2LM^2 + 4L^2M^2\left(1 + \frac{\xi}{P\gamma}\right)^2 D\right)T\alpha^2}{T\alpha} \\ & \leq 5\sqrt{\frac{\left(f\left(x_0\right) - f\left(x^{*}\right)\right)\left(2LM^2 + 4L^2 M^2\left(1 + \frac{\xi}{P\gamma}\right)^2 D\right)}{T}}.
\end{align*}

\section{Experimental Details}

\paragraph{Datasets and models.} We evaluated the algorithm on two machine learning tasks, namely classification and linear regression. We train ResNet110~\cite{he2016deep} on CIFAR-10~\cite{krizhevsky2009learning} for image classification. We train a linear classifier on the RCV1 corpus~\cite{RCV1} using logistic regression and perform linear regression to train a model on a synthetic dataset containing 10K samples with 1024 features randomly generated with some Gaussian noise added.

\paragraph{Setup.} We conduct experiments by implementing the algorithm into the two frameworks CNTK~\cite{CNTK} and MPI-OPT~\cite{renggli2018sparcml}. The latter is a framework developed to run distributed optimization algorithms such as SGD or SCD on multiple compute nodes communicating via any MPI library with minimal overhead. We make use of SparCML~\cite{renggli2018sparcml} as the communication layer to efficiently aggregate the sparse gradients. Implementation details can be found in~\cite{renggli2018sparcml}. For image classification, we use standard batch sizes and default hyper-parameters form the full accuracy convergence
in all our experiments, which we define to be our baseline. These values are given in the open-source CNTK 2.0 repository. The image classification, experiments are conducted on 4 nodes. We tune the hyper-parameters such as batch-size, initial learning rate and decay factor for logistic and linear regression in order to achieve best possible convergence on the full accuracy baseline setting. We set those values for performing experiments with various values for $K$ and perform the experiments using 8 nodes.

\end{document}